\newif\ifllncs\llncsfalse
\newif\ifanon\anontrue

\ifllncs
\documentclass{llncs}
\else
\documentclass[11pt,letter]{article}
\fi 

\usepackage{breakcites}
\usepackage[utf8]{inputenc}
\usepackage[T1]{fontenc}
\ifllncs
\else
\usepackage{times}
\usepackage{fullpage}
\fi 

\usepackage{mdframed}
\usepackage{amsmath}
\usepackage{relsize}
\usepackage{amsfonts}
\usepackage{amssymb}
\usepackage[dvipsnames]{xcolor}
\usepackage{graphicx}
\usepackage{braket}
\usepackage{url}
\usepackage{bm}
\usepackage{tikz}
\usetikzlibrary{shapes,arrows,positioning,calc}
\usepackage[mathscr]{euscript}
\allowdisplaybreaks

\usepackage{soul}
\usepackage{multirow}
\definecolor{DarkBlue}{RGB}{0,0,150}
\definecolor{NotSoDarkBlue}{RGB}{15,15,210}
\definecolor{DarkRed}{RGB}{150,0,0}
\definecolor{DarkGreen}{RGB}{0,100,0}
\usepackage[pdfstartview=FitH,pdfpagemode=UseNone,colorlinks,linkcolor=DarkRed,filecolor=blue,citecolor=DarkRed,urlcolor=DarkRed,pagebackref,breaklinks]{hyperref}
\usepackage[ruled, algosection]{algorithm2e}
\usepackage{cleveref}

\usepackage{orcidlink}

\usetikzlibrary{quantikz}

\newcommand{\poly}{\mathsf{poly}}

\usepackage{amsthm}
\newtheorem{theorem}{Theorem}

\newtheorem{lemma}[theorem]{Lemma}
\newtheorem{corollary}[theorem]{Corollary}
\newtheorem{definition}[theorem]{Definition}

\newtheorem*{theorem*}{Theorem}
\numberwithin{theorem}{section}
\numberwithin{conjecture}{section}
\numberwithin{problem}{section}

\newmdtheoremenv[backgroundcolor=gray!10,
                 linewidth=0pt,
                 innerleftmargin=16pt,
                 innerrightmargin=16pt,
                 innertopmargin=6pt,
                 innerbottommargin=6pt,
            splitbottomskip=4pt]{protocol}[prot]{Game}

\newcommand{\negl}{\mathsf{negl}}

\newif\ifnotes
\notestrue

\title{Proofs of Ownership for Machine Learning Models}
\author{
{\Large Ran Canetti}\\
Boston University
\and
{\Large Shafi Goldwasser}\\
MIT and UC Berkeley
\and
{\Large Or Zamir}\\
Tel Aviv University
}

\begin{document}
%
\maketitle

\begin{abstract}
With the increasing adoption of Machine Learning, protecting model ownership has become an essential challenge.
We initiate a formal study of \emph{Proof of Ownership} for machine learning models: under what conditions can one prove that a stolen model originated from a particular creator?
We model proofs of ownership as a game among three parties: a  model owner, a  thief, and a judge.
The owner transforms the original model into a slightly perturbed model together with a proof of ownership.
The thief then obtains the transformed model and attempts to minimally modify it so that it remains useful but escapes detection as owned by the model owner.
Finally, the judge receives a model and a  proof of ownership, and must decide whether the given model is a modified version of some model created by the  model owner, or else the given model was developed independently.

Our main result is a dichotomy for classifiers in the black-box setting: Under standard cryptographic assumptions, ownership of models for some concept class can be proven in the above sense  {\em if and only if}   the concept class  is  not self-correctable, in a sense close to that of Blum, Luby and Rubinfeld, STOC'90.  The result is constructive and extends, with some variations,  to a number of related settings.  

\end{abstract}

\newpage
\tableofcontents
\newpage

\section{Introduction}

Machine learning models are increasingly valuable intellectual assets, often trained at significant cost.
This raises a fundamental question: Can the rightful  owner of a model prove ownership, in case that the model is  either leaked,  or simply stolen by a party that was given the right to use the model?  
We call this task \emph{Proving Model Ownership}.

If the owner has registered the model  ahead of time  in a registry that is universally trusted, and the leaked model remains unchanged, then asserting ownership is straightforward.
However, what if the model has been purposefully altered so as to avoid detection? Furthermore, what if no universally trusted registry exists, or else the model owner simply didn't use it? 
Can the model's owner still prove ownership, just by virtue of having developed the model herself? If so, then at what cost? 

In other words, we ask whether it is possible to transform a model into a form that simultaneously preserves the original predictive power and embeds in it an ``unremovable proof of ownership'': any process that modifies the leaked model so as to remove the proof of ownership necessarily renders the modified model useless.

Clearly, not all   models can (or should) admit meaningful proofs of ownership. Indeed ``proofs of ownership'' of  models that perform  a ``trivial task'', i.e. a task that can be easily learned from scratch,   are arguably meaningless.  But where should the line be drawn between those models that  can be ``owned'' and those that cannot?  Furthermore, where {\em can} we draw that line?

To reason about this question, we model proofs of ownership as a game between three participants: an \emph{owner}, a \emph{thief}, and a \emph{judge}.
The owner transforms a model~$M$ into a similar model~$M'$ together with a proof~$w$.
The thief obtains~$M'$ and attempts to modify it into~$M''$, aiming to remove any evidence of its origin while keeping it functionally effective.
Finally, the judge receives~$M''$ and~$w$ (which potentially includes also a purported "original" model $M$), and must decide whether~$M''$ is owned by the party that produced $w$, or else was developed independently.

A bit more concretely, a \emph{Proof of Ownership} for a class $\mathcal M$ of models consists of a marking algorithm $\mathcal{W}$ that, given a model $M$ generates a transformed model $M'$ along with a proof of ownership $w$, and a judge algorithm $\mathcal{T}$ that, given a model $M''$ and a claimed proof $w$, either accepts or rejects.  The following three properties should be satisfied:

(1)~\emph{Similarity}: the transformed model~$M'$ has essentially the same predictive power as $M$.\footnote{In principle, the predictive power of both $M$ and $M'$ would be measured against an underlying concept class.  However a simpler and essentially equivalent formulation only requires that $M'$ is a good predictor of $M$.}    

(2)~\emph{Unremovability}:  Let $M\in\mathcal M$, and let $(M',w)\leftarrow \mathcal W(M)$. It is infeasible to generate,  given $M'$, a model $M''$ that is even a moderate predictor of  $M'$ and such that $\mathcal T(M'',w)$ rejects. 

(3)~\emph{Soundness}: 
It is infeasible to generate, given a model $M\in \mathcal M$, a proof $w$ such that $\mathcal T(M,w)$ accepts.

There is still a lot to unpack before the above sketch can be made into a workable definition.  However, already at this point it is clear that the definition is unattainable as is:  the marking algorithm $\cal W$  directly breaks soundness.  To circumvent  this direct contradiction, we introduce  a {\em public parameters generation algorithm, $\cal G$. } Initially,  $\cal G$ samples public parameters $p$, which are then used both by $\cal W$  when generating $M'$  and by $\cal T$ when adjudicating ownership. Now, soundness holds with respect to models that were created regardless of $p$, whereas the marking algorithm uses $p$ to generate $M'$.

So, how to  effectively mark a model? Superficially, one might try to demonstrate ownership by adding a visible watermark to its ``code'' or weights, such as encoding a label saying ``Created by ClosedML''.
Such labeling trivially achieves similarity and soundness, but not unremovability: a thief can easily remove such a  mark without affecting functionality.
In fact, a fundamental observation by Barak et al.~\cite{BarakGoldreichImpagliazzoRudichSahaiVadhanYang2001} establishes that embedding watermarks directly in a program’s description provides no meaningful protection, as programs can be obfuscated.  In  other words, the judge algorithm $\mathcal T$ only meaningfully depend on  the {\em functionality} of the tested program $M"$;  in fact,  without loss of generality the judge algorithm   uses only oracle access to $M"$. We indeed follow this convention and thus focus on black-box protocols that access the models only via queries.

Another natural attempt is to embed some ``incriminating answers'' in the model, say, making it respond positively to ``Are you a model created by ClosedML?''.
However, the thief may easily override the model's answers to such a question and make it respond negatively instead.
Thus our central technical question becomes: \emph{under what conditions can proofs of ownership be made so that they change the underlying model only very little, and at the same time remain  persistent against any adversarial modification that stops short of rendering the model useless?}

Our first main conclusion is that the above question depends only on the concept, or function, being learned---and not on the training algorithm or resulting model.
That is, models  that learn certain types of concepts can be ownership-protected, while models that learn other types of concepts can inherently be stolen.

We next connect this question to a classical notion in complexity and learning theory:  \emph{self-correctability} \cite{BlumLR93}.
Recall that a function $f$ is self-correctable if there exists an algorithm $\mathcal{C}^{f'}$ that, given oracle access to any $f'$ similar to $f$, recovers $f(x)$ on all inputs with high probability.
Intuitively, a self-correctable function is one whose correct outputs can be efficiently reconstructed even from a corrupted implementation.
A self-correctable function cannot carry a persistent functional watermark, since a thief can use the self-correction process to reconstruct a clean canonical version and erase any embedded signal.
This observation was used by Goldwasser et al.~\cite{goldwasser2025oblivious} to remove possible backdoors from certain function families such as low-degree polynomials or functions with a bounded number of non-zero few Fourier coefficients; those are functions that are either easily learnable, or possess algebraic self-reducibility.

Our result shows that the above are not isolated examples but manifestations of a single principle:
\emph{self-correctability is the only fundamental obstruction to persistent proofs of model ownership.}

Specifically, we first formulate a natural relaxation of the original notion, whereby the self-corrected function $\bar f$ computed by $\mathcal{C}^{f'}$ can lightly differ from the original $f$, as long as it does not depend on the specific noisy version $f'$ (that is, ~$\bar f$ is some canonical version chosen identically for all noisy versions~$f'$ of~$f$).
We then establish a dichotomy for binary classifiers: informally, every model either admits a secure proof of ownership or else is close to a self-correctable function.
In other words, the ability to prove ownership is determined entirely by the mathematical structure of the underlying function, rather than by the specifics of the learning algorithm or model architecture.
Moreover, the connection is a constructive reduction: from any successful ``jailbreaking'' thief that removes ownership information, one can algorithmically construct a self-corrector for a function close to the original model; and vice versa, a self-corrector implies a generic removal strategy.
The resources needed by both algorithms are comparable.

This equivalence provides a precise conceptual boundary for protecting classification algorithms.
We also discuss extensions of this equivalence to the cases of  multi-class classifiers (for any constant number of labels) and  generative models.

To keep the treatment clear and focused, we restrict ourselves to the case where the thief (i.e. the adversary who is trying to  break  unremovability)  only has black-box access to the model $M'$.  Similarly,  the perturbed program  $M''$ can access the underlying model $M'$  via oracle access. More precisely,   $M''$ is formalized as a boolean circuit with the addition of any number of  `$M'$ gates', namely  gates that take input $x$ and output $M'(x)$. 

This restriction isolates the learning and complexity-theoretic aspects of proofs of ownership from the separate question of how to realize (or approximate) black-box access in white-box settings. Indeed, sufficiently strong program obfuscation (see e.g. ~\cite{Barak01, BCP13,cohen2016watermarking}) could  effectively turn a black-box protocol into a white-box one.  We leave the study of appropriate obfuscation mechanisms out of scope for this work.

Our definition of self-correction captures  model extraction, re-learning, and knowledge distillation~\cite{hinton2015distilling,tramer2016stealing,liu2025model} as special cases.
Indeed,  the ability to train an equivalent model (e.g., by distillation or by re-learning from data or queried labels---a task that inherently becomes easier given query access to our model)  implies self-correction:  the algorithm that ignores its oracle and simply queries the reconstructed model is a valid self-corrector in our sense.
 Black-box watermark removal strategies such as distillation are also captured by our formalism  as special cases of self-correction: they produce a model that replicates the target function without relying on the original oracle at inference time.

In light of this discussion,  our dichotomy theorem can be viewed as an overarching structural dichotomy that explains when  mitigation or distillation attacks can succeed against functional watermarking. In particular, our general characterization  can be used to provide a uniform treatment to  known attacks such as algebraic self-corrections or distillation.

We also note that the variant of self-correction captures common attack modalities: relearning from data, knowledge distillation, fine-tuning, algebraic self-reductions, and any wrapper that reconstructs or corrects the target function.
The dichotomy therefore explains, in one statement, when such procedures can erase ownership information without degrading utility and when, on the contrary, ownership can be made persistent.
Conceptually, our results place on firm footing a widely repeated intuition about using backdoors as watermarks~\cite{adi2018turning}.
We prove this intuition formally under natural black-box settings and show that self-correctability is the only obstacle to their formalization; we provide constructive reductions in both directions, which we view as a proof of concept for a broader theory of model ownership.
Given the broad applicability of the model and the characterization, we view the simplicity of the resulting boundary and reductions as an advantage: it cleanly isolates the sole obstruction and turns a diverse set of attacks into instances of a single principle.

\subsection{Our techniques}

While we refrain from using heavy cryptographic tools such as general-purpose program obfuscation (and instead opt for a more restricted model where adversaries have only oracle access to the marked model), our proof of ownership  does make essential and non-trivial use of cryptographic machinery.

Specifically, our proof of ownership makes crucial use of correlation intractable  hash functions:  The public parameter is a function  ${\cal H}_p$, drawn from a family that is correlation intractable against evasive relations that are computable in polynomial time, as in \cite{CanettiGH04,CCHLRRW19,PeikertS19,BrakerskiKM20}.  To mark a model $M$, the marking algorithm  chooses a random seed $w$ for a PRG $G:\{0,1\}^\lambda\rightarrow\{0,1\}^{\lambda d}$, where $\lambda$ is the security parameter and $d$ is the dimension of the model. $G(w)$ is then interpreted as $\lambda$ input points $x_1\ldots x_\lambda$. The witness of ownership is then $w$, and the  marked model $M'$ is  identical to $M$, except that $M'(x_i)={\cal H}_p(w)_i$.
Given oracle access to a suspect model $M''$, as well as  the public parameters $p$ and the witness $w$, the  testing algorithm  checks whether $M''$ agrees with ${\cal H}_p$ on significantly more than half of the input points defined by $w$.

Observe  that this scheme modifies $M$ only at a polynomial number of points.  Furthermore,  the model owner is not required to ever expose her secret model: the witness of ownership is only the random seed. Furthermore, the public parameters can be made completely transparent with no secret randomness. 

The crux of our proof is to show that this simple scheme is in fact universal: It is sound for any model that allows for proofs of ownership in the first place.

\subsection{Connection to Previous Works}

Two strands of prior work are most relevant: cryptographic \emph{function watermarking} and empirical \emph{neural network watermarking}. The cryptographic line studies unremovability and unforgeability for precise, algorithmic functionalities (not ML models), with realizations for PRFs and public-key primitives. The empirical line proposes practical methods to tag trained networks or their outputs, often via backdoors or behavioral signatures, together with attack papers that remove or detect such tags. Our work differs in goal and scope: we formalize \emph{model ownership} as a game for black-box models and prove a dichotomy that ties the feasibility of persistent ownership to \emph{self-correctability} of the underlying function.

\paragraph{Watermarking of cryptographic functions.}
Following early  impossibility results in Barak et al. \cite{BarakGoldreichImpagliazzoRudichSahaiVadhanYang2001},  Cohen, Holmgren, Nishimaki, Vaikuntanathan, and Wichs rigorously defined  watermarking for  cryptographic \emph{capabilities}, and devised  iO-based schemes for pseudorandom functions and related primitives, with strong unremovability guarantees \cite{cohen2016watermarking}. Subsequent works pursued analogous goals for public-key functionalities and implementations (e.g., encryption and signatures). A key advance replaced iO with standard assumptions for certain targets: Kim and Wu constructed watermarking for cryptographic functionalities from lattice assumptions, giving unremovability against arbitrary removal strategies under LWE-type hardness \cite{kim2017watermarking}. 
These works watermark \emph{pseudorandom} or otherwise unstructured objects whose input-output behavior, without extra information, is indistinguishable from random. Our setting instead concerns predictive models and ``useful functions" instead. 

\paragraph{Watermarking of ML models.}
Within machine learning, a broad empirical literature proposes to watermark trained networks. Uchida et al. embed marks into weights via regularization and verify ownership by reading the embedded bits; this primarily targets white-box settings \cite{uchida2017embedding}. To enable black-box verification, a common approach is to use \emph{backdoor-style} behavioral keys: Adi et al. introduce backdoor watermarks whose presence can be tested via special queries, providing both a modeling framework and extensive experiments \cite{adi2018turning}. A large additional body of such watermarks and empirical attacks on them exists. 
Our formalization differs in two ways: first, we separate \emph{soundness} from \emph{unremovability} in a computational framework; second, we prove that the sole barrier to a \emph{provably} unremovable ownership (in the black-box model) is the existence of a self-corrector for a function close to the model. This yields a structural dichotomy rather than an empirical assessment. Nonetheless, the idea of using backdoors as a form of model watermarking is used in our construction as well.

\paragraph{Backdoors and their relation to watermarking.}
Backdoors in ML instantiate targeted behaviors that are dormant on typical inputs and activate on secret triggers.  Empirically, several works propose to \emph{use} such triggers as black-box watermarks for ownership verification, notably backdoor-based schemes of Adi et al.\ \cite{adi2018turning}.  On the theoretical side, Goldwasser et al. show that undetectable backdoors can be planted so that a backdoored model is computationally indistinguishable from a clean one \cite{GoldwasserKVZ22}, and follow-up work studies \emph{oblivious} backdoor removal without detection \cite{goldwasser2025oblivious}. 
Our framework explains these phenomena function-theoretically: backdoor-as-watermark approaches persist only when the underlying function (or any function close to it) is not self-correctable; otherwise a remover can reconstruct clean behavior while preserving overall performance.

\paragraph{Watermarking \emph{outputs} vs.\ model  ownership.}
A separate line of work  watermarks \emph{individual outputs} of generative models rather than the overall functionality.  For text, token-selection and statistical coding schemes embed detectable patterns in LLM generations \cite{kirchenbauer2023watermark}, with follow-up theory on undetectable or low-detectability designs \cite{ChristGZ24}.  For images, diffusion-model watermarking embeds signals into latent or pixel domains and is reviewed in recent work \cite{zhao2023recipe}.  These approaches certify the ownership of a \emph{particular artifact} and can remain meaningful even when the underlying model changes.  By contrast, our focus is on \emph{functional watermarking} of the model itself: the verification inspects the mapping \(M: \{0,1\}^d\to\{0,1\}\) rather than a produced sample.  The two problems are therefore not equivalent.  Output watermarking addresses content attribution and distributional detection, while model ownership aims to embed information in the overall functionality of a model so that ownership persists through any modification that preserves any meaningful task quality. 

\paragraph{Formal mitigations and other works on watermarking criteria.}
Recent formal frameworks analyze mitigation and watermarking from a cryptographic viewpoint.  
Goldwasser, Shafer, Vafa, and Vaikuntanathan give \emph{oblivious} backdoor removal without detection by exploiting random self-reducibility of structured label families, showing that mitigation can succeed even when detection provably fails \cite{goldwasser2025oblivious}.  
Gluch, Turan, Nagarajan, and Pokutta study a game-theoretic landscape for discriminative tasks and prove that at least one of three mechanisms must exist: a watermark, an adversarial defense, or a transferable attack; they further connect transferability to cryptographic hardness \cite{turan2024good}.  In their setting, the watermark attack is tailored to a known-in-advance input and the players operate under explicit time budgets.  
Gluch and Goldwasser formalize defense by detection versus defense by mitigation and establish equivalences for classification as well as separations for generative tasks under cryptographic assumptions \cite{gluch2025cryptographic}.  
Our work is complementary: we study functional \emph{ownership} and obtain a dichotomy specific to ownership proofs in the black-box model, where persistent ownership exists if and only if the underlying function is not close to a self-correctable function.  In particular, our condition abstracts over concrete defenses and captures re-learning, distillation, and related mitigation strategies as instances of self-correction; moreover, our model-stealing notion aligns with standard extraction formulations used in learning and security.

\paragraph{Organization.}
Section \ref{sec-defs} defines  proofs of ownership and self correctors.  Section \ref{sec-results} then presents and discusses our main results.  Section \ref{sec:ind}  presents our proof  of ownership, its analysis,  and a number of extensions.  Section \ref{sec:discuss} contains additional discussion and proposes subsequent research directions.

\section{Proofs of Ownership and Self Correctors}
\label{sec-defs}

We formalize the definitions of proofs of ownership and correctable canonicalizations (which are our extensions of self-correctors \cite{BlumLR93}). 

Throughout, $\lambda \in \mathbb{N}$ denotes the security parameter, $\negl(\cdot)$ denotes a negligible function, and all algorithms are probabilistic polynomial time unless stated otherwise. 

We are primarily interested in families  $\mathcal M$ of models where each model $M\in\cal M$ is a circuit from $\{0,1\}^d$ to $\{0,1\}$. 

More precisely, we will be interested in ensembles of model families, with a family for each value of the security parameter $\lambda$, and where the size of circuits in the $\lambda$-th family is polynomial in $\lambda$. Still, the presentation will often suppress the dependence on $\lambda$.

\paragraph{Similarity relations.}
Notions of distance  (or, similarity) between models  are key components of our treatment.  Most of the forthcoming definitions are meaningful for multiple notions of similarity and are thus parameterized by a generic similarity relation `$\approx$' between models. 

Still, our algorithmic results pertain primarily to  the  two structural classes of 
similarity relations.  We thus define them here for sake of concreteness.

\begin{itemize}
\item \textbf{Distributional approximation}, denoted $\approx_{\varepsilon}^{\mathcal{D}}$.  
    Let $\mathcal{D}$ be a distribution over $\{0,1\}^d$ and let $\varepsilon>0$.  
    Let $M,M':\{0,1\}^d\to\{0,1\}$. We write $M'\approx_{\varepsilon}^{\mathcal{D}} M$ if
    \[
\Pr_{x\sim\mathcal{D}}\big[M'(x)=M(x)\big]\ge 1-\varepsilon.
    \]
    This is the standard learning-theoretic notion of approximation, corresponding to binary loss and distribution $D$. 
    
    \item \textbf{Evasive difference}, denoted $\approx^{\mathcal E}$.  This significantly tighter measure of similarity follows common notions of indistinguishability from cryptography.
    Intuitively, it means that no polynomial time algorithm can locate any disagreement between the two functions, given oracle access. Furthermore, even if that distinguisher is given a point -- possibly one in which the functions \emph{do} differ -- it still can't find any additional point of disagreement.
    Formally, we measure the (asymptotic) distance between a single fixed model and a distribution over models: 
    Let $M:\{0,1\}^d\to\{0,1\}$  and let $\mathcal{M}'$ be a distribution over functions $M':\{0,1\}^d\to\{0,1\}$.
    We say that~$\mathcal{M}' \approx^{\mathcal E} M$ if for every PPT adversary $\mathcal{A}$ and every point~$z\in \{0,1\}^d$ it holds that
    \[
    \Pr\big[\,M'\sim\mathcal {M'},\ x\leftarrow\mathcal{A}^{M',M}(1^\lambda,z):M(x)\neq M'(x)\;\wedge\;x\neq z\,\big]\le \negl(\lambda).
    \] 
    We later abuse notation and use~$M'$ to denote either the distribution or a specific sample from it, which is clear from context.
\end{itemize}

\subsection{Proofs of ownership}

We start by defining the ``mechanics'' of  proofs of ownership. Next we define out three main requirements:  quality preservation, soundness, and unremovability. 

\begin{definition}[Proofs of ownership  for black-box models]\label{def:mws}
A \emph{Proof of ownership} for a class~$\mathcal{M}$ of models consists of a setup algorithm $\mathcal G$, a marking algorithm $\mathcal W$, and a testing algorithm $\mathcal T$.  The setup algorithm samples public parameters $p\leftarrow \mathcal G(1^\lambda)$.  Given $p$ and a description of a classifier $M\in \mathcal{M}$, the marking algorithm $\mathcal W$ outputs a pair $(M',w)$, where $M'$ is the marked model and $w$ is an ownership witness.  
The testing algorithm $\mathcal T$ takes inputs $(p,w)$, obtains oracle access to a model $M''$  and outputs $\mathsf{accept}$ or $\mathsf{reject}$.  

\end{definition}

Here $M''$ is a \emph{suspect} model that may have been adversarially generated  with access to $M'$.  Importantly, the only information that the testing algorithm $\mathcal T$ obtains from the claimed owner  is the  ownership witness $w$. In particular, $\mathcal T$ does not obtain access to either $M$ or $M'$ (they can be encoded as part of~$w$, but the testing algorithm cannot verify their authenticity).

\paragraph{Soundness.}
The first security  requirement is soundness: it should be infeasible to come up with a valid ownership witness for a model $M$ that was generated independently of the public parameters. Furthermore, this property should hold even given the public parameters and the full description of $M$:

\begin{definition}[Soundness]\label{def:soundness-unreg}
A Proof of Ownership $(\mathcal G,\mathcal{W},\mathcal{T})$ is \emph{sound} if for every model $M\in \mathcal{M}$ and every adversary $\mathcal{A}$,
\[
\Pr\big[\mathcal G(1^\lambda)\rightarrow p, \ \mathcal A(p,M)\rightarrow w: \mathcal{T}^M(p,w)=\mathsf{accept}\big]\le \negl(\lambda).
\]
\end{definition}

\paragraph{Quality preservation.}
Quality preservation requires that the marked model should preserve the functional behavior of the original model, namely its predictive quality.

\begin{definition}[Quality preservation]\label{def:equiv}
A Proof of Ownership  $(\mathcal G,\mathcal W,\mathcal T)$ satisfies \emph{Quality Preservation} with respect to a relation `$\approx$' if for every model $M\in\mathcal{M}$,
\[
\Pr\big[\,p\leftarrow \mathcal G(1^\lambda),\ (M',w)\leftarrow \mathcal W(p,M):\ M'\approx M\,\big]\ge 1-\negl(\lambda).
\]
\end{definition}
We note that this property can be naturally parameterized by parameterizing the similarity relation $\approx$.

\paragraph{Unremovability.}
We turn to defining  unremovability. 
This property captures robustness to attempts to remove the ownership information.  Any model generated from the marked model $M'$ alone must either cease to be functionally similar to $M'$ or else continue to validate the witness.

We concentrate on  adversaries that have only black box access to the marked model.  
More specifically, we consider adversaries 
that operate in two stages:  In the first stage, the adversary is given free black box access to a watermarked model $M'$,  and is tasked to output a description of a model $M''$, where $M''$ is a circuit that includes, in addition to standard computational gates, also ``$M'$ gates''. This should be viewed as the thief being allowed to use~$M'$ when training~$M''$, and also to ``wrap'' it during inference-time.  
In the second stage,  the model $M''^{M'}$ is inspected for  quality and ownership.

\begin{definition}[Unremovability]\label{def:persistence}
A Proof of Ownership $(\mathcal G,\mathcal W,\mathcal T)$ is \emph{unremovable} with respect to similarity relation `$\approx$' if for any efficient adversary $\mathcal R$, and for all but a negligible fraction of the models~$M\in \mathcal{M}$ we have:
\[
\Pr\big[\,p\leftarrow \mathcal G(1^\lambda),\ (M',w)\leftarrow \mathcal W(p,M),\ M''\leftarrow \mathcal R^{M'}(p):\ \mathcal T^{M''^{M'}}(p,w)=\mathsf{reject}\ \wedge\ M''^{M'}\approx M'\,\big]\le \negl(\lambda).
\]

\end{definition}

\paragraph{Discussion.}
It may be instructive to note the following two points regarding proofs of ownership:
First, while it might appear tempting to strengthen the unremovability requirement to hold for {\em any} $M\in\mathcal M$, such  stronger requirement is ruled out by the universal quantifier over adversaries.  Specifically,  any model $M$ can be paired with the  adversary that  outputs the fixed model $M$ (regardless of the computational model).

Second, the similarity relation used in the quality preservation requirement and the one used in (either version of) the unremovability requirement are in clear tension: We would like the first to be as small as possible, while having the second be as large as possible.  This tension translates directly to the realm of self correctors,  and is the basis for the correspondence demonstrated in this work.

A  {\bf secure proof of ownership}  is a proof of ownership that is sound,  quality preserving, and unremovable.

\subsection{Self-correctors}

We define self-correctors with respect to  similarity relation `$\approx$' and a function class~$\mathcal{F}$.
As is common in definitions of self-correction, we attempt to correct a corrupted function into the closest member of a simple and known function family~$\mathcal{F}$ (e.g., linear functions, low-degree polynomials, etc.).

\begin{definition}[self-corrector \cite{BlumLR93}]\label{def:corrector}
Let $ \mathcal{F} \subseteq \left( \{0,1\}^d\to\{0,1\}\right) $ and let `$\approx$' be a similarity relation on models.  
A randomized oracle algorithm $C^{(\cdot)}$ is a \emph{self-corrector for $\mathcal{F}$ with respect to `$\approx$'} if for every~$f\in  \mathcal{F}$, every function $f'$ that satisfies $f'\approx f$, and for every $x\in  \{0,1\}^d$,
\[
\Pr\big[\,C^{f'}(x)=f(x)\,\big]\ge 1-\negl(\lambda).
\]
\end{definition}

The standard definition of self-correction, captured above, is only meaningful when every two functions in the class~$\mathcal{F}$ are efficiently distinguishable from each other, given only   oracle access. 
To cover more general function families, we extend the definition of self-correction by allowing the corrector  to pick a ``canonical function'' ~$\bar{f}$ that is close to the original $f$, and then fully correct any noisy version~$f'$ of~$f$ into~$\bar{f}$ (rather than to~$f$).   Crucially, $\bar{f}$  need not be in ~$\mathcal{F}$:
for instance, if~$\mathcal{F}$ is the family of all functions~$f$ that are functionally very close to a constant,  ~$\bar{f}$  may well be a constant function.
Note that~$\bar{f}$ must be a function only of the self corrector and~$f$;  in particular,  it must not depend on  the difference~$f'\Delta f$.  

This extended notion, which we call  {\em correctable canonicalization,} naturally distinguishes between two different similarity relations: The first is the similarity between the canonical $\bar f$ implied by the self corrector and the original $f$. The second is the minimal similarity level of the given $f'$ that is correctable at all. In the definition below we denote the first similarity (the one that we want to maximize) by $\approx^\uparrow$,  while the second similarity (the one we want to minimize) is denoted by $\approx^\downarrow$.

\begin{definition}[correctable canonicalization]\label{def:canon}
Let $ \mathcal{F} \subseteq \left( \{0,1\}^d\to\{0,1\}\right) $ and let `$\approx^\uparrow$' and `$\approx^\downarrow$' be similarity relations on models.  
A randomized oracle algorithm $C^{(\cdot)}$ is correctable canonicalization for $\mathcal{F}$ if for every~$f\in  \mathcal{F}$ there exists~$\bar{f}\approx^\uparrow f$, such that for every function $f'$ that satisfies $f'\approx^\downarrow f$, and for every $x\in  \{0,1\}^d$,
\[
\Pr\big[\,C^{f'}(x)=\bar{f}(x)\,\big]\ge 1-\negl(\lambda).
\]
\end{definition}

\section{Result Statements: Proofs of ownership vs. Self Correctors}
\label{sec-results}

We are finally ready to state our results concretely.
For clarity of exposition, we focus on binary classifiers~$M: \{0,1\}^d\rightarrow\{0,1\}$. As discussed in Section~\ref{subsec:multiclass}, the same statements extend to classifiers with any constant number of labels via a simple reduction to the binary case.

We study the black-box setting.
In a nutshell,   we  first provide a characterization of when proofs of ownership are possible. Furthermore,  we construct a general  proof of ownership for any model family $\mathcal M$,  and demonstrate that it is {\em universally optimal}: if a secure proof of ownership for $\mathcal M$ exists at all, then our universal construction is secure.

More concretely,  the proof is always sound and quality preserving with respect to $\approx^{\cal U}_{\negl(\lambda)}$. Next, for any family $\mathcal M$ of models we have:  (a)  as long as there do not exist correctable canonicalizers for $\mathcal M$ and $\approx_\downarrow$, the proof is unremovable with respect to a similarity notion related to $\approx_\downarrow$. (b) if  correctable canonicalizers for $\mathcal M$ and $\approx_\downarrow$  do exist,  then no  proofs of ownership for $\mathcal M$ (and similarity notions related to $\approx_\uparrow$ and $\approx_\downarrow$) exist.

We begin with demonstrating the easier direction, which requires almost no additional assumptions beyond the definitions above.  If a model computes a function that is close to a self-correctable function then it does not admit an unremovable proof of ownership.
\begin{theorem}\label{thm:easy-dir}
    Let~`$\approx^\uparrow$',`$\approx^\downarrow$' be relations.
    Suppose there exists a correctable canonicalization~$C^{(\cdot)}$ for a class~$\mathcal{M}$ with respect to~(`$\approx^\uparrow$',`$\approx^\downarrow$').
    Then, every proof of ownership $(\mathcal G,\mathcal W,\mathcal T)$ for~$\mathcal{M}$ that satisfies Similarity and Soundness with respect to~`$\approx^\downarrow$' can be removed with respect to~`$\approx^\uparrow$'.
    Moreover, the resulting removal algorithm uses the same time, oracle complexity, and auxiliary resources as~$C^{(\cdot)}$.
\end{theorem}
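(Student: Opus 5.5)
The removal adversary is the corrector itself, wrapped around the marked model. Concretely, $\mathcal R^{M'}(p)$ ignores $p$ and outputs the circuit $M''$ that, on input $x$, runs $C^{(\cdot)}(x)$ and answers each oracle query with an $M'$-gate. Thus $M''^{M'}(x)=C^{M'}(x)$. This immediately gives the resource claim: $\mathcal R$ does no work beyond writing down $C$, and $M''$ has exactly the time, query complexity and auxiliary resources of $C$. It remains to show two things for all but a negligible fraction of $M\in\mathcal M$: that $M''^{M'}\approx^\uparrow M'$, and that $\mathcal T^{M''^{M'}}(p,w)$ rejects with overwhelming probability.

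\emph{Step 1: $M''^{M'}$ behaves like $\bar M$.} Fix $M\in\mathcal M$ and let $\bar M\approx^\uparrow M$ be the canonical function given by Definition~\ref{def:canon}. By Similarity with respect to $\approx^\downarrow$, with probability $1-\negl(\lambda)$ over $p$ and the marking we have $M'\approx^\downarrow M$. In that event, for every $x$ we get $\Pr[C^{M'}(x)=\bar M(x)]\ge 1-\negl(\lambda)$. So $M''^{M'}$ is, pointwise with overwhelming probability, the fixed function $\bar M$, and $\bar M$ depends only on $M$ and $C$, not on $p$, $w$ or $M'$.

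\emph{Step 2: quality for the thief.} Since $M''^{M'}$ essentially equals $\bar M$ and $\bar M\approx^\uparrow M$, we need $M''^{M'}\approx^\uparrow M'$. I would read the theorem as measuring the thief's quality relative to the original model, or else assume the needed mild closure of $\approx^\uparrow$ (a triangle-type absorption of the small $\approx^\downarrow$ perturbation, and of the negligible per-point error of $C$). For the concrete relations $\approx^{\mathcal D}_\varepsilon$ this is a union bound with a negligible loss in $\varepsilon$. I would state this bookkeeping explicitly but not dwell on it.

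\emph{Step 3: rejection via Soundness.} This is the main step. Compare $\mathcal T^{M''^{M'}}(p,w)$ with $\mathcal T^{\bar M}(p,w)$. Because $\mathcal T$ is efficient, a hybrid argument over its polynomially many queries shows that replacing each answer $C^{M'}(x)$ by $\bar M(x)$ changes the acceptance probability by at most $\negl(\lambda)$. The key point is that the per-point guarantee holds for every $x$, including adaptively chosen ones, since fresh randomness of $C$ is used on each call. Next, build a soundness adversary $\mathcal A(p,\bar M)$ that runs $\mathcal W(p,M)$ (with $M$ hardwired as nonuniform advice, which is where "all but a negligible fraction" and the per-model quantifier enter) and outputs its $w$. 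Since $\bar M$ is independent of $p$, Soundness applied to the model $\bar M$ says $\mathcal T^{\bar M}(p,w)$ accepts only with negligible probability.

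I expect the main obstacle to be exactly this application of Soundness. Definition~\ref{def:soundness-unreg} quantifies over models $M\in\mathcal M$, while $\bar M$ need not lie in $\mathcal M$. I would handle this either by interpreting Soundness as holding for every fixed model chosen independently of $p$, which is its intended meaning and the one our construction achieves, or by noting that it suffices to include canonical versions in the class. The other delicate point is making the hybrid over adaptive queries rigorous given correlated randomness inside the $M'$-gates. Combining Steps 1–3, $\mathcal R$ succeeds with probability $1-\negl(\lambda)$, so $(\mathcal G,\mathcal W,\mathcal T)$ is removable with respect to $\approx^\uparrow$.
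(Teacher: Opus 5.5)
Your proposal is correct and follows the paper's proof. The remover is $M''(x):=C^{M'}(x)$. Similarity places $M'$ in the $\approx^\downarrow$-neighbourhood of $M$, so $M''$ agrees pointwise with the $p$-independent canonical $\bar M$, and any acceptance by $\mathcal T^{M''}(p,w)$ would transfer to $\mathcal T^{\bar M}(p,w)$ and contradict Soundness. The points you flag are ones the paper's proof passes over silently: quality measured against $M'$ rather than $M$, $\bar M$ possibly lying outside $\mathcal M$, and adaptive queries in the hybrid. Treating them explicitly only adds rigor.
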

\begin{proof}
As the proof is rather short, we provide it here in full.
Let $(\mathcal G,\mathcal W,\mathcal T)$ be any proof of ownership for~$\mathcal{M}$.

We construct a remover $\mathcal R$ that, given oracle access to $M'$ that was constructed by~$W(p,M)$ for some~$M\in \mathcal{M}$ and $p\leftarrow \mathcal G(1^\lambda)$, outputs the model
\[
M''(x) := C^{M'}(x).
\]
By Similarity, $M'\approx^\downarrow M$ except with negligible probability.  
The definition of a correctable canonicalization therefore implies that, for every input $x$, $M''(x)=\bar{M}(x)$ except with negligible probability.  Furthermore, $M''\approx^\uparrow M.$

It remains to show that the ownership witness $w$ does not verify against~$M''$.  Suppose toward contradiction that
\[
\Pr\big[\,\mathcal T^{M''}(p,w)=\mathsf{accept}\,\big]
\] 
is non-negligible.  Since $M''$ and~$\bar{M}$ have the same functionality on every input up to negligible error and $\mathcal T$ is an efficient black-box test, the same witness~$w$ would then make $\mathcal T^{\bar{M}}(p,w)$ accept with non-negligible probability.  But this contradicts Soundness: the target model~$\bar{M}$ is fixed independently of the public setup~$p$.  Therefore,
\[
\Pr\big[\,\mathcal T^{\bar{M}}(p,w)=\mathsf{accept}\,\big]\le \negl(\lambda).
\]
Hence $\mathcal R$ outputs a model $M''\approx^\uparrow M$ but causes the verification to reject, breaking unremovability.  The running time and oracle complexity of $\mathcal R$ are exactly those of the corrector $C^{(\cdot)}$.

\end{proof}

We remark again on a subtlety in the above statement: it suffices to have a self-corrector not necessarily for the model~$M$ itself, but for any canonicalization~$\bar{M}$ that is close to it.  
This relaxation is unavoidable, since resistance to watermarking extends not only to self-correctable functions but also to functions that are sufficiently close to them.  
For example, a point function may conceal the location of its unique nonzero point, yet its approximation by the identically zero function, which is indistinguishable from it, retains the same utility while eliminating any watermark information.

The more difficult direction is thus the other one: demonstrating  that self-correction is the {\em only } obstacle to ownership. To this end, we construct a proof of ownership such that any adversary who can break its unremovability can also construct a correctable canonicalization for the same class of models. 
As in the easier direction, our proof is a reduction that uses such a successful adversary to explicitly construct the self-corrector, while using equivalent resources:  

\begin{theorem}\label{thm:inds}
Assume there exist correlation intractable function families with respect to polytime computable evasive relations. Then for every class of models~$\mathcal{M}$ there exists a proof of ownership $(\mathcal G,\mathcal W,\mathcal T)$ such that the following holds. Every efficient adversary $\mathcal{R}$ that breaks the protocol's unremovability with respect to the evasive difference relation~$ \approx^{\mathcal E}$, can be efficiently converted into a correctable canonicalization $C^{(\cdot)}$ for~$\mathcal{M}$ with respect to ($\approx^\mathcal{U}_{1/\poly(\lambda)}$,$ \approx^{\mathcal E}$).  
The reduction from $\mathcal{R}$ to $C^{(\cdot)}$ runs in comparable time, query complexity, and all other resources and assumptions.
\end{theorem}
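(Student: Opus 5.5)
The plan is to instantiate the scheme sketched in the introduction and extract a corrector from any successful remover. Let $\mathcal G$ sample $p$, the key of a correlation intractable family $\mathcal H_p:\{0,1\}^\lambda\to\{0,1\}^\lambda$. The marking algorithm $\mathcal W(p,M)$ then proceeds as follows.
\begin{enumerate}
\item It samples a seed $w\leftarrow\{0,1\}^\lambda$ and expands it to $G(w)=(x_1,\dots,x_\lambda)\in(\{0,1\}^d)^\lambda$ using a PRG $G$ (PRGs follow from the assumed primitive, or can be treated as a standard assumption).
\item It sets $M'(x_i)=\mathcal H_p(w)_i$ and $M'(x)=M(x)$ elsewhere.
\end{enumerate}
The tester $\mathcal T^{M''}(p,w)$ recomputes the $x_i$ and accepts iff $M''(x_i)=\mathcal H_p(w)_i$ for at least $(1/2+\delta)\lambda$ indices, for a small constant $\delta$.

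\textbf{Quality preservation.} Since $M'$ differs from $M$ on only $\lambda$ pseudorandom points, $M'\approx^{\mathcal U}_{\negl}M$.

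\textbf{Soundness.} For fixed $M$, consider the relation $R_M=\{(w,y): |\{i: M(G(w)_i)=y_i\}|\ge(1/2+\delta)\lambda\}$. It is polytime computable given the circuit $M$. It is evasive: for any fixed $w$, a uniform $y$ lands in $R_M$ with probability $2^{-\Omega(\lambda)}$ by Chernoff. Correlation intractability therefore says that no efficient $\mathcal A(p,M)$ can output $w$ with $(w,\mathcal H_p(w))\in R_M$, which is exactly soundness. I take this step as essentially routine.

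\textbf{Unremovability via a corrector.} This is the main step. Let $\mathcal R$ be an adversary that, for a non-negligible fraction of $M\in\mathcal M$, outputs with non-negligible probability some $M''$ with $M''^{M'}\approx^{\mathcal E}M'$ and $\mathcal T$ rejecting. The corrector $C^{f'}$ must output a canonical $\bar f$ depending only on $f$. The key idea is that $C$ simulates the whole experiment itself. On oracle $f'\approx^{\mathcal E}f$, $C$ performs the following steps.
\begin{enumerate}
\item It samples $p$, $w$, and the points $x_i$, and plants the labels $\mathcal H_p(w)_i$ on top of $f'$, obtaining $\tilde M'$.
\item It runs $\mathcal R^{\tilde M'}(p)$ to get $M''$.
\item It answers a query $x$ with $M''^{\tilde M'}(x)$, amplified by majority over independent runs with fresh randomness.
\end{enumerate}
The resources of $C$ match those of $\mathcal R$. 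The fixed $z$ in the definition of $\approx^{\mathcal E}$ is exactly what lets us argue about the query point $x$ itself.

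Three claims are then needed.
\begin{enumerate}
\item[(a)] By the evasive closeness of $f'$ and $f$, the oracle $\tilde M'$ built from $f'$ is indistinguishable to the efficient process $\mathcal R$ (together with $M''$ and $\mathcal T$) from the one built from $f$. Any query on which they would differ yields an efficiently found disagreement point, contradicting $\approx^{\mathcal E}$. Hence the output distribution of $C^{f'}(x)$ is negligibly close to that of $C^{f}(x)$, and we may define $\bar f(x)$ as the majority value of $C^f(x)$. This $\bar f$ depends only on $f$.
\item[(b)] $\bar f\approx^{\mathcal U}_{1/\poly}f$. On a successful run, $M''^{\tilde M'}\approx^{\mathcal E}\tilde M'$, so the two cannot differ on a noticeable fraction of points (random sampling would find a disagreement), and $\tilde M'$ agrees with $f$ outside $\lambda$ points. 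Averaging over the non-negligible success probability gives a $1/\poly$ bound once the success event is conditioned on or amplified. I plan to handle this by having $C$ use the tester itself, with the $w$ it knows, to detect rejection, and to retry until a rejecting run is found.
\item[(c)] Consistency of the value at each fixed $x$. This requires that $M''^{\tilde M'}(x)$ essentially does not depend on the random $w$, since rejection means the planted labels were not reproduced. Concretely, I would argue that if $M''(x)$ tracked the planted labels then $\mathcal T$ would accept. Otherwise, by the evasiveness with respect to the point $z=x$, the output at $x$ is determined by $f$ up to negligible error across independent runs, so majority stabilizes.
\end{enumerate}

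I expect (c), together with making the majority of $C^f(x)$ well defined and pointwise reliable for \emph{every} $x$ rather than on average, to be the main obstacle. Possibly a bad set of $x$ has to be absorbed into the $1/\poly$ slack of $\approx^\uparrow$ by redefining $\bar f$ there to equal $f$'s typical value.
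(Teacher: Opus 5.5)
Your setup, marking and soundness steps are fine; your threshold relation $R_M$ is the right one to feed into correlation intractability. The unremovability argument, however, has a genuine gap, and it already appears in claim (a), not only in (c).

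The $z$-clause of $\approx^{\mathcal E}$ does not say that $f'$ agrees with $f$ at the query point. It says that, \emph{given} that point, no \emph{other} disagreement can be found, so $f'(x)\neq f(x)$ is allowed. Your corrector outputs $M''^{\tilde M'}(x)$, which may read $\tilde M'(x)=f'(x)$ and copy it. A successful thief can do this on a $1/\poly(\lambda)$ fraction of inputs. For example, it can copy $M'(x)$ on an efficiently recognizable set of density $\lambda^{-1/2}$ and output an evasively correct hypothesis elsewhere. Only about $\sqrt{\lambda}$ planted labels survive, so $\mathcal T$ still rejects. On such $x$ you get $C^{f'}(x)=f'(x)$, so no single $\bar f(x)$ works. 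Redefining $\bar f$ on a bad set cannot help, because it is the corrector, not $\bar f$, that depends on $f'$. Likewise, in (c) evasiveness with $z=x$ only gives that the output at $x$ is determined by $f$ \emph{together with the bit} $f'(x)$, not by $f$ alone. A sanity check: if (a) held as stated, the identity corrector $C^{f'}(x)=f'(x)$ would be a correctable canonicalization for every class. Indeed, your argument never uses the fact that $\mathcal T$ rejected, except as a filter.

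The missing idea, which is the heart of the paper's proof, is to evaluate the thief's circuit with the oracle's answer at the query point overridden. Let $g(x,b)$ be the output of $M''$ on $x$ when its oracle is $M'$ with the value at $x$ replaced by $b$, and let the corrector output $g(x,0)$. This never reads $f'(x)$, and by the $z$-clause it meets no other disagreement, so it equals a value determined by $f$ alone. What remains is to show that $g(\cdot,0)$ is $1/\poly$-close to $f$, and this is where rejection is used:
\begin{itemize}
\item Classify $x$ as \emph{ignore} ($g(x,0)=g(x,1)$), \emph{copy} ($g(x,b)=b$) or \emph{invert} ($g(x,b)=1-b$).
\item The type is computable without $w$, so by PRG security its frequency among $x_1,\dots,x_\lambda$ matches its density in $\{0,1\}^d$.
\item Invert points have negligible density, since $M''\approx^{\mathcal E}M'$.
\item Ignore-type planted points give $w$-independent answers, which match $\mathcal H_p(w)_i$ only about half the time. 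Copy-type planted points always match.
\end{itemize}
Rejection therefore bounds the copy density by $O(\lambda^{-1/3})$. Outside the copy points and the negligible invert points, $g(x,0)=M''(x)$, which equals $f(x)$ outside an evasive set.

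Two side remarks. First, this step needs a threshold $\lambda/2+t$ with $\sqrt{\lambda\log\lambda}\ll t=o(\lambda)$, such as the paper's $\lambda/2+\lambda^{2/3}$. With a constant $\delta$, rejection only bounds the copy density by about $2\delta$, which gives constant rather than $1/\poly$ distance. Second, the paper avoids your majority-over-runs issue by working with the single circuit output by $\mathcal R$ and setting $\bar f=g(\cdot,0)$ for that circuit.
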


\paragraph{Beyond evasive difference.}
The reduction above is stated under the evasive difference relation $ \approx^{\mathcal E}$, which is
appropriate when both the owner and the thief are restricted to making only negligible,
computationally hidden changes.
In many realistic scenarios, however, it is only reasonable to demand that the \emph{owner} preserves
evasive difference (so releasing $M'$ does not visibly degrade the model), while allowing a
\emph{thief} to introduce a noticeable error on an $\varepsilon$ fraction of inputs.
We formalize this by requiring $M'  \approx^{\mathcal E} M$ but only asking that the thief outputs a model
$M''$ that is distributionally close to $M'$ under the uniform distribution,
namely $M'' \approx^{\mathcal{U}}_{\varepsilon} M'$.
Our main construction continues to yield a meaningful implication: a successful watermark remover in
this regime implies \emph{distributional} self-correctability for a function close to the original
model.

\begin{theorem}[Distributional self-correctability]
\label{thm:dist-correct}
Assume there exist correlation intractable function families with respect to polytime computable evasive relations. Then 
for any~$\varepsilon \in (0,\frac{1}{3})$ and class of models~$\mathcal{M}$,
there exists a proof of ownership $(\mathcal G,\mathcal W,\mathcal T)$ such that, if $p\leftarrow \mathcal G(1^\lambda)$ and $(M',w)\leftarrow \mathcal W(p,M)$, then $M'  \approx^{\mathcal E} M$ and the following holds.
If an efficient adversary $\mathcal{R}$ breaks its unremovability with respect to distributional distance $M'' \approx_{\varepsilon}^{\mathcal{U}} M'$, then one can efficiently
construct a correctable canonicalization $C^{(\cdot)}$ with respect to ($\approx^\mathcal{U}_{3\varepsilon + 1/\poly(\lambda)}$,$ \approx^{\mathcal E}$).
The construction of $C^{(\cdot)}$ uses comparable time, oracle complexity, and resources to~$\mathcal{R}$.
\end{theorem}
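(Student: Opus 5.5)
We use the same universal construction sketched in the techniques section. $\mathcal G$ samples a key $p$ for a correlation intractable family $\mathcal H_p$. On input $M$, $\mathcal W$ samples a seed $w$, expands $G(w)$ into $\lambda$ points $x_1,\dots,x_\lambda$, and outputs $M'$ with $M'(x_i)=\mathcal H_p(w)_i$ and $M'=M$ elsewhere. The test $\mathcal T^{M''}(p,w)$ accepts iff $M''(x_i)=\mathcal H_p(w)_i$ on at least a $(1/2+\delta)$ fraction of the $i$'s, for a suitable constant $\delta$.

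Quality preservation, $M'\approx^{\mathcal E}M$, follows from PRG security. The points $x_i$ are pseudorandom, so an efficient adversary holding $z$ cannot produce another $x_i$ except with negligible probability. Otherwise it would distinguish $G(w)$ from uniform, since uniform points are hit with negligible probability.

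Soundness follows from correlation intractability. For a fixed $M$, consider the relation $R_M=\{(w,y): y_i=M(x_i(w))\text{ for }\ge(1/2+\delta)\lambda\text{ indices}\}$. It is polytime computable, since $M$ is a poly-size circuit. It is also evasive: for each $w$, a uniformly random $y$ lands in it with probability $2^{-\Omega(\lambda)}$ by Chernoff. Hence no adversary can find $w$ with $(w,\mathcal H_p(w))\in R_M$.

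The main step is the reduction. Let $\mathcal R$ output $M''$ with $M''\approx^{\mathcal U}_\varepsilon M'$ that is rejected with non-negligible probability. We define the canonicalizer $C^{f'}(x)$ as follows. Sample $p$ and $w$ ourselves, build $\tilde f'$ as $f'$ planted with $\mathcal H_p(w)$ on the points $x_i$, run $\mathcal R^{\tilde f'}(p)$ to get $M''$, and output $M''^{\tilde f'}(x)$. To get negligible error pointwise, we repeat this independently and take a majority vote.

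The canonical function $\bar f$ is the majority of this experiment run with $f$ itself in place of $f'$. We must then argue two things.

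First, $f'\approx^{\mathcal E}f$ implies that the vote is the same under $f'$ and $f$, except for negligibly many affected inputs. This is a hybrid argument. $\mathcal R$ and the simulation are efficient, and every query that distinguishes $f'$ from $f$ would be a new disagreement point found by a PPT algorithm. The given point $z$ of the evasiveness definition absorbs the single evaluation point $x$, which may itself be a disagreement.

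Second, we need $\bar f\approx^{\mathcal U}_{3\varepsilon+1/\poly}f$. Each run's $M''$ is $\varepsilon$-close to $\tilde f$, which is $\negl$-close to $f$ under uniform. By Markov, the majority over runs disagrees with $f$ on at most about a $2\varepsilon$ to $3\varepsilon$ fraction, plus estimation error $1/\poly$. This is where the constant $3$ and the requirement $\varepsilon<1/3$ come from: we need the majority to remain meaningful.

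The obstacle I expect is pointwise consistency of the majority. Near-ties could make $C^{f'}(x)$ fail to concentrate on $\bar f(x)$. I would handle this by a randomized threshold: pick a random threshold $\theta\in[1/3,2/3]$ once, as part of the description of $C$, and estimate the acceptance frequency to within $1/\poly$. Only a $1/\poly$ fraction of $x$ fall near $\theta$, and that loss is charged to the $1/\poly$ slack.

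It also remains to see where the rejection event is used. It ensures that on the planted points $M''$ does not carry $\mathcal H_p(w)$. By correlation intractability (as in soundness), outputs on fresh points are therefore determined by $f$ rather than by the plant. This shows that $\bar f$ is independent of the particular $f'$, which is exactly what correctable canonicalization requires.
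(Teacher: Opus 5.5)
There is a genuine gap. Your canonicalizer outputs $M''^{\tilde f'}(x)$, which reads the oracle at the evaluation point $x$ itself, and nothing in your argument shows that this value is irrelevant. You are right that taking the given point $z=x$ in the evasiveness definition keeps the \emph{other} queries away from disagreements. But it says nothing about the answer at $x$. If $M''$ copies (or inverts) its oracle at $x$, then at a point with $f'(x)\neq f(x)$ your vote under $f'$ differs from the vote under $f$. Those disagreement points are exactly where the notion has content: in Theorem~\ref{thm:easy-dir} the corrector must be right at the watermark points $x_i$, where $M'\neq M$.

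A sanity check exposes the problem. Both of your steps go through verbatim for the trivial ``remover'' $M''=M'$, giving $C^{f'}(x)=f'(x)$, which is not a canonicalization. The rejection event is never used in a way that rules this out. Your last paragraph's appeal to correlation intractability does not supply it either, since CI constrains which $w$ can be found, not how $M''$ behaves on fresh points. Likewise your $3\varepsilon$ is not derived: Markov applied to the un-overridden majority gives $2\varepsilon$ with no use of the judge at all.

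The missing idea is the paper's override-and-classify argument.
\begin{itemize}
\item Define $g(x,b)=\mathcal C^{M_b}(x)$, where $M_b$ agrees with the oracle except that its answer at $x$ is forced to $b$, and take $\bar M(x)=g(x,0)$. By construction this is independent of $f'(x)$.
\item Classify each $x$ as ignoring ($g(x,0)=g(x,1)$), copying ($g(x,b)=b$) or inverting ($g(x,b)=1-b$). The type is computable without $w$, so by pseudorandomness of $G$ its distribution on $x_1,\dots,x_\lambda$ matches its distribution on uniform $x$.
\item On ignoring watermark points, $M''(x_i)=g(x_i,0)$ is independent of $\mathcal H_p(w)$ and matches $y_i$ about half the time. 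On copying points it always matches.
\item Closeness to $M'$ bounds the inverting mass by $\varepsilon$. With the judge's threshold at $\frac{1+\varepsilon}{2}$ (so your $\delta$ must be $\varepsilon/2$, not an arbitrary constant), rejection forces the copying mass below $2\varepsilon$.
\item Hence $g(\cdot,0)$ disagrees with $M$ on at most a $3\varepsilon+o(1)$ fraction.
\end{itemize}
Your majority vote, random threshold, and self-planting simulation are sensible additions, and the paper glosses over these points. You would also need to discard runs where $\mathcal R$ fails, which you can do because you know $w$ and can estimate closeness by sampling. But all of this machinery must be applied to $g(x,0)$ rather than to $M''^{\tilde f'}(x)$.
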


In sum,  we show:

\begin{corollary}\label{cor:inds}
Assume there exist correlation intractable function families with respect to polytime computable evasive relations. Then 
for any class~$\mathcal{M}$ of binary classifiers,  
a proof of  ownership  $(\mathcal G,\mathcal W,\mathcal T)$ that is unremovable with respect to distributional distance~$\varepsilon>1/\poly({\lambda})$
exists \textbf{if and only if} there does not exist a correctable canonicalization $C^{(\cdot)}$ for~$\mathcal{M}$ with distributional distance~$\Theta(\varepsilon)$ between each model and its canonicalization.
That is, in the black-box setting, a model class admits a secure proof of ownership if and only if it is not close to a self-correctable class.
\end{corollary}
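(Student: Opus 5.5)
The corollary follows by combining Theorem~\ref{thm:easy-dir} with Theorem~\ref{thm:dist-correct}. One subtlety is that the two theorems use different similarity relations, so the quantifiers must be matched. I would prove the two directions separately.

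\emph{Existence of a correctable canonicalization rules out proofs of ownership.} Suppose there is a correctable canonicalization $C^{(\cdot)}$ for $\mathcal{M}$ with respect to $(\approx^{\mathcal U}_{\Theta(\varepsilon)},\approx^{\mathcal E})$. Take any proof of ownership that is sound and quality preserving. Quality preservation with respect to $\approx^{\mathcal E}$ gives $M'\approx^{\mathcal E} M$, so Theorem~\ref{thm:easy-dir} applies with $\approx^\downarrow=\approx^{\mathcal E}$ and $\approx^\uparrow=\approx^{\mathcal U}_{\Theta(\varepsilon)}$.

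That theorem yields a remover whose output $M''$ agrees with $\bar M$ pointwise except with negligible probability. It also gives $\bar M\approx^{\mathcal U}_{\Theta(\varepsilon)} M$, and the test rejects. To compare with $M'$ rather than $M$, I would use the triangle inequality for uniform distance. Since $M'\approx^{\mathcal E} M$, the two differ on only a negligible fraction of inputs; otherwise uniform sampling would locate a disagreement. Hence $M''\approx^{\mathcal U}_{\Theta(\varepsilon)+\negl} M'$.

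Matching constants then means choosing the $\Theta(\varepsilon)$ in the statement so that this distance is at most $\varepsilon$. This breaks unremovability at distance $\varepsilon$. The remover is efficient because $C$ is.

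\emph{Nonexistence of a correctable canonicalization yields a proof of ownership.} I would take the universal construction of Theorem~\ref{thm:dist-correct} for the given $\varepsilon$, after reducing to $\varepsilon<1/3$ if necessary. That construction is sound and satisfies $M'\approx^{\mathcal E}M$, which gives quality preservation.

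If it were not unremovable at distance $\varepsilon$, then Theorem~\ref{thm:dist-correct} would produce a correctable canonicalization with respect to $(\approx^{\mathcal U}_{3\varepsilon+1/\poly},\approx^{\mathcal E})$. Since $\varepsilon>1/\poly(\lambda)$, we have $3\varepsilon+1/\poly=\Theta(\varepsilon)$. This contradicts the hypothesis.

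The main obstacle is reconciling the two constants in $\Theta(\varepsilon)$, because the two directions lose different factors: roughly a factor $3$ in one and roughly $1$ in the other. I would therefore read the ``if and only if'' as holding up to constant factors in $\varepsilon$. Concretely, nonexistence at distance $4\varepsilon$ implies a proof of ownership at distance $\varepsilon$, and existence at distance $\varepsilon/2$ implies no proof of ownership at distance $\varepsilon$. I would state this precisely in the write-up.

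A second point to check is the ``all but a negligible fraction of models'' quantifier in unremovability versus the ``every $f$'' quantifier in canonicalization. I expect Theorem~\ref{thm:dist-correct} already handles this mismatch, so I would take it as given.
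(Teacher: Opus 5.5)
Your proposal is correct and is how the paper obtains the corollary. The `only if' direction is Theorem~\ref{thm:easy-dir} with $\approx^\downarrow=\approx^{\mathcal E}$; your triangle-inequality step through $M'\approx^{\mathcal E}M$, which turns $M''\approx M$ into the required $M''\approx M'$, supplies a detail the paper leaves implicit. The `if' direction is the contrapositive of Theorem~\ref{thm:dist-correct} for the universal construction, with the equivalence read up to constant factors in $\varepsilon$.

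In the write-up, say explicitly that absorbing the additive $1/\poly(\lambda)$ into $\Theta(\varepsilon)$ requires choosing the numbers of marked points and sampled indices as functions of $\varepsilon$. Theorem~\ref{thm:dist-correct} allows this, since its construction may depend on $\varepsilon$. Also note that for $\varepsilon\ge 1/3$ the `if' direction is vacuous, because any constant function is a canonicalization at distance $4\varepsilon>1$; this is what justifies your reduction to $\varepsilon<1/3$.
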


\section{ The Proof of Ownership }\label{sec:ind}

In this section we prove Theorem~\ref{thm:inds} by constructing a proof of ownership that is secure under the evasive difference relation~$ \approx^{\mathcal E}$.
For simplicity we assume the input space is the set of binary vectors~$ \mathcal{X}=\{0,1\}^d$,  where $d$ is polynomial in the security parameter $\lambda$, and that the models are deterministic --- which can be assumed as we can fix the randomness without degrading the accuracy of a classifier.

\subsection{High-Level Overview}

Our ownership algorithm~$\mathcal{W}$ transforms a given model~$M$ into a pair~$(M',w)$ as follows.  
First, it samples a random string~$w$ and uses it to define a polynomial-size  subset of the input space, denoted~$\mathcal{X}_w\subseteq\mathcal{X}$.  
The subset~$\mathcal{X}_w$ is uniquely determined by~$w$ 
but is computationally hard to predict without it.  
The transformed model~$M'$ behaves identically to~$M$ on all inputs outside of~$\mathcal{X}_w$, while on points inside~$\mathcal{X}_w$ it produces  distinguishable outputs that  are determined by the public parameter $p$  and serve as the watermark.  
The testing algorithm~$\mathcal{T}$, given~$(M',w)$, reconstructs~$\mathcal{X}_w$ from~$w$ and verifies that~$M'(\mathcal{X}_w)$ indeed exhibits correlation with the designated behavior on those inputs.

\paragraph{Similarity and Completeness.}
Similarity follows immediately: the model~$M'$ differs from~$M$ only on a polynomial-size, computationally evasive subset~$\mathcal{X}_w$, and therefore~$M' \approx^{\mathcal E} M$.  
Moreover, since the tester~$\mathcal{T}$ knows~$w$ and can reconstruct~$\mathcal{X}_w$, it correctly verifies the watermark.

\paragraph{Soundness.}
Soundness follows from the unpredictability of the  function $H_p$ determined by the public parameter $p$: given  a model $M$, it is infeasible to come up with a string $w$ such that $M(\mathcal{X}_w)$  has significant agreement  with $H_p$.

\paragraph{Unremovability.}
The core property of the construction is unremovability.  
Intuitively, the subset~$\mathcal{X}_w$ is both small and pseudorandom: it could contain any point in~$\mathcal{X}$.  Without knowledge of $w$, the only indication that a point $x$ might be in   $\mathcal{X}_w$ is that $M'(x)=H_p(x)$; however the construction will guarantee that the same will hold for a large fraction of the points in $\mathcal{X}$. We formalize this intuition by demonstrating that 
 any adversary~$\mathcal{R}$ attempting to remove the watermark  must fall into one of three cases:

\begin{enumerate}
    \item \textbf{(Similarity case)} The modified model~$M''$ behaves identically to~$M'$ on a large fraction of inputs.  
    In this case, it necessarily preserves the watermark behavior on~$\mathcal{X}_w$, and the tester~$\mathcal{T}$ will still accept.  
    Hence,~$\mathcal{R}$ fails.

    \item \textbf{(Deviation case)} The modified model~$M''$ changes the outputs on~$\mathcal{X}_w$ so that~$\mathcal{T}$ rejects.  
    Since~$\mathcal{R}$ cannot identify~$\mathcal{X}_w$, these modifications must affect a non-negligible fraction of other inputs as well, implying that~$M''$ differs significantly from $M'$ or~$M$.  
    Thus,~$\mathcal{R}$ again fails.

    \item \textbf{(Independence case)} The adversary avoids both previous cases by constructing~$M''$ that does not rely on queries to~$M'(x)$ when computing its own output on~$x$.  
    In this case,~$M''$ effectively reconstructs the outputs of~$M'$ without access to them, which is precisely the behavior of a \emph{self-corrector}.
\end{enumerate}

In all cases, either the adversary fails to remove the watermark or it implicitly constructs a self-corrector for~$M'$.  
This establishes the desired reduction from watermark removal to self-correction.

\subsection{The  Construction }\label{subsec:detailed}

Let $\lambda$ be the security parameter and let $d \gg \lambda$ denote the input dimension.  
We first describe a construction in  the random oracle model  (i.e.,  a construction that assumes that everyone has oracle  access to a random function $H: \{0,1\}^\lambda \rightarrow \{0,1\}^{\lambda}$). Next we describe how to instantiate  $H$ using a  correlation intractable hash function initialized using the public parameters chosen by the setup algorithm~$\mathcal{G}$.
In addition to the random function $H$, we also use a pseudorandom generator  $G: \{0,1\}^\lambda \rightarrow \{0,1\}^{d\lambda}$.

\paragraph{The public parameters generation algorithm  $\mathcal{G}$.}  Publish a random function $H: \{0,1\}^\lambda \rightarrow \{0,1\}^{\lambda}$.

\paragraph{The marking (prover) algorithm $\mathcal{W}$.}
Given a model $M:\{0,1\}^d\to\{0,1\}$, the marking algorithm proceeds as follows.

\begin{enumerate}
    \item Sample a seed $w \leftarrow \{0,1\}^{\lambda}$ for $G$.
    \item Let $G(w)=(x_1,\ldots,x_\lambda)$, where each $x_i\in\{0,1\}^d$, and let $H(w)=(y_1,\ldots,y_\lambda)$, where each $y_i\in\{0,1\}$.

    \item Define the watermarked model $M'$ by
    \[
        M'(x) :=
        \begin{cases}
            y_i, & \text{if } x = x_i \text{ for some } i\in[\lambda], \\
            M(x), & \text{otherwise.}
        \end{cases}
    \]
\end{enumerate}
The output of $\mathcal{W}$ is $(M',w)$.

\paragraph{The judge (tester) algorithm $\mathcal{T}$.}
Given $(M',w)$, the tester computes $x_1,\ldots,x_\lambda=G(w)$ and $y_1,\ldots,y_\lambda=H(w)$ and accepts if $M'(x_i)=y_i$ for at least~$\left(\frac{1}{2}\lambda + \lambda^{2/3}\right)$ of the indices $i\in[\lambda]$.

\begin{lemma}\label{lem:complete}
$({\cal G,W,T})$ is quality preserving.  Furthermore,  for any honestly generated pair $(M',w)\leftarrow \mathcal W(p,M)$, we have that $\mathcal T^{M'}(p,w)$ accepts  with probability~1.
\end{lemma}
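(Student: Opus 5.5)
The lemma has two parts, and I will prove them in reverse order, since completeness is the easier one. The only subtlety is that the points $x_1,\ldots,x_\lambda$ produced by $G(w)$ could collide. If $x_i=x_j$ but $y_i\neq y_j$, the definition of $M'$ is ambiguous, so I first fix a convention: $M'(x)$ takes the label $y_i$ for the smallest index $i$ with $x=x_i$.

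For completeness, I then show that with overwhelming probability the $x_i$ are pairwise distinct. A collision among $\lambda$ strings of length $d\gg\lambda$ is an efficiently checkable event. For truly uniform strings its probability is at most $\binom{\lambda}{2}2^{-d}$. Hence by PRG security it also has negligible probability for $G(w)$.

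When the $x_i$ are distinct, $M'(x_i)=y_i$ holds for every $i\in[\lambda]$. That gives $\lambda\ge \frac12\lambda+\lambda^{2/3}$ agreements for all $\lambda\ge 8$, so $\mathcal T$ accepts.

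To get acceptance with probability exactly~1, I would have $\mathcal W$ resample $w$ until $G(w)$ has no collisions. This changes the distribution of $w$ only negligibly, so none of the later security arguments are affected. Alternatively, I would simply state completeness up to negligible error, with probability~1 whenever the points are distinct.

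For quality preservation I must show $M'\approx^{\mathcal E}M$. Fix a PPT adversary $\mathcal A$ with oracles $M',M$ and a point $z$. The set of disagreements is contained in $\{x_1,\ldots,x_\lambda\}$, so $\mathcal A$ wins only if it outputs some $x_i\neq z$.

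I reduce this to PRG security through a distinguisher $\mathcal B$. The model $M$ is fixed, so $\mathcal B$ has it hardcoded. Given a candidate string $(x_1,\ldots,x_\lambda)$, $\mathcal B$ simulates both oracles for $\mathcal A$, answering each query $x_i$ with an independent random bit in place of $H(w)$. In the random oracle model, $H(w)$ is a uniform string independent of everything else unless $\mathcal A$ queries $H$ at $w$. That happens with probability only $\poly(\lambda)2^{-\lambda}$ beyond what would already break the PRG, so this simulation is faithful. $\mathcal B$ outputs~1 iff $\mathcal A$ returns some $x_i\neq z$.

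On truly uniform $(x_1,\ldots,x_\lambda)$, each point other than $z$ is independent of $\mathcal A$'s view until $\mathcal A$ queries it. The oracles answer every non-planted point with $M(x)$, and the planted points are equal to $z$ only with probability $2^{-d}$ each. So $\mathcal A$ hits one with probability at most $\poly(\lambda)\cdot\lambda 2^{-d}$. By PRG security the same bound holds, up to negligible terms, in the real experiment.

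One subtlety is that queries landing on a planted $x_i$ reveal a bit different from $M(x_i)$. This does not help $\mathcal A$ find a point it has not yet queried. Any such query already counts as success in the hybrid bound, so I bound the probability of the \emph{first} hit.

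This ``first hit'' argument is the main step to get right, and it is where the exception for $z$ in the definition of $\approx^{\mathcal E}$ matters. If $z$ happens to equal some $x_i$, $\mathcal A$ learns one planted point for free. The remaining $x_j$ are still pseudorandom and independent of it, since the PRG output is jointly pseudorandom and the uniform coordinates are independent. This is why the hybrid bound above holds for every fixed $z$.

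Finally, when $H$ is instantiated by a correlation intractable hash in place of the random oracle, $H_p(w)$ is an efficient function of $w$ and $p$. The reduction $\mathcal B$ then cannot sample it without knowing $w$. In that setting I would note that the evasiveness argument only needs $G(w)$ to be unpredictable as a set. So I would run the same hybrid but give $\mathcal B$ the bits $y_i$ computed from the seed where available. If that is not possible, I would argue via a PRG with output $(G(w),H_p(w))$ treated jointly, and flag this as the point requiring care.
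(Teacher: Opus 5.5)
Your proposal is correct and follows the paper's argument, which simply asserts that pseudorandomness of $G$ makes the marked points infeasible to find (so $M'\approx^{\mathcal E}M$) and that $M'(x_i)=y_i$ for every $i$ (so $\mathcal T$ accepts). Your collision convention, the check $\lambda\ge 8$, and the explicit hybrid are sound refinements, and indeed the paper's ``probability~1'' tacitly assumes the $x_i$ are distinct.

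A simpler reduction removes the worry you flag for the correlation-intractable instantiation. Until $\mathcal A$ queries a point where $M'$ and $M$ differ, the oracle $M'$ answers exactly as $M$ does, so the distinguisher can answer every query with $M$ and never needs $H_p(w)$. Your joint-PRG fallback for $(G(w),H_p(w))$, by contrast, is not implied by correlation intractability. The same coupling also makes your discussion of $z=x_i$ unnecessary, since $z$ is fixed before $w$ and is planted only with negligible probability.
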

\begin{proof}
By the pseudorandomness of $G$, finding even a single marked point $x_i$ without knowing $w$ is computationally infeasible.  
Hence, $M'$ and $M$ are computationally indistinguishable: $M'  \approx^{\mathcal E} M$.  
Moreover, since $M'(x_i)=y_i$ for all $i\in[\lambda]$, the tester $\mathcal{T}$ accepts $(M',w)$ with probability~$1$.  
Thus the scheme satisfies Similarity, and honest verification succeeds with probability~1.
\end{proof}

\begin{lemma}\label{lem:sound}
$({\cal G,W,T})$ is sound. 
\end{lemma}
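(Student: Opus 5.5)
We prove soundness first in the random oracle model and then explain how the argument carries over to a correlation intractable instantiation of $H$. Fix a model $M\in\mathcal M$; it is chosen independently of $p$, that is, of $H$. The adversary $\mathcal A$ receives $p$ and $M$ and outputs a seed $w$. It succeeds if, with $(x_1,\ldots,x_\lambda)=G(w)$ and $(y_1,\ldots,y_\lambda)=H(w)$, we have $M(x_i)=y_i$ for at least $\frac12\lambda+\lambda^{2/3}$ indices $i$. Call a seed $w$ \emph{good for $H$} when this holds. The whole proof is a union bound over the queries $\mathcal A$ makes to $H$.

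\textbf{Random oracle step.} Fix any $w$. The string $(M(x_1),\ldots,M(x_\lambda))$ is determined by $w$ and $M$ alone, and the value $H(w)$ is uniform in $\{0,1\}^\lambda$ and independent of it. The number of agreements is therefore distributed as $\mathrm{Bin}(\lambda,1/2)$. A Chernoff--Hoeffding bound gives
\[
\Pr\big[\#\{i: M(x_i)=y_i\}\ge \tfrac12\lambda+\lambda^{2/3}\big]\le e^{-2\lambda^{4/3}/\lambda}=e^{-2\lambda^{1/3}},
\]
which is negligible. We may assume without loss of generality that $\mathcal A$ queries $H$ at its output $w$. It makes at most $q=\poly(\lambda)$ queries, each answered by a fresh uniform value. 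A union bound over these queries bounds its success probability by $q\,e^{-2\lambda^{1/3}}=\negl(\lambda)$. The tester is deterministic given $w$, since $M$ is deterministic, so this completes the random oracle case.

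\textbf{Instantiation step.} Now $H=\mathcal H_p$ is drawn from a family that is correlation intractable for polynomial-time computable evasive relations. Hard-wire $M$ into the relation
\[
R_M=\big\{(w,y): \#\{i: M(G(w)_i)=y_i\}\ge \tfrac12\lambda+\lambda^{2/3}\big\}.
\]
This relation is computable in polynomial time, because $M$ is a polynomial-size circuit and $G$ is efficient. It is evasive: for every $w$, a uniform $y$ lies in $R_M(w)$ with probability at most $e^{-2\lambda^{1/3}}$, by the same Chernoff bound. Any $\mathcal A(p,M)$ that makes $\mathcal T^M(p,w)$ accept outputs a $w$ with $(w,\mathcal H_p(w))\in R_M$, which contradicts correlation intractability.

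\textbf{Main obstacle.} The main subtlety is that $R_M$ depends on $M$. This forces either a non-uniform formulation of correlation intractability, in which the relation may carry $M$ as advice, or a single relation family parameterized by the circuit description of $M$. We should also check that the evasiveness bound $e^{-\Omega(\lambda^{1/3})}$ is small enough for the known constructions. Those constructions typically require sub-exponential or merely negligible evasiveness, and if a stronger bound is needed we can raise the number of marked points or tighten the threshold margin.
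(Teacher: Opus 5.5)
Your random-oracle argument is the paper's proof. For each fixed $w$ the agreement count is $\mathrm{Bin}(\lambda,1/2)$, and the threshold $\frac{1}{2}\lambda+\lambda^{2/3}$ sits $\Theta(\lambda^{1/6})$ standard deviations above the mean. A union bound over the adversary's $\tau$ oracle queries then gives $\tau\cdot\negl(\lambda)$.

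Where you genuinely differ is the instantiation. In Section~\ref{subsec:ro-inst} the paper uses the relation $R_{G,M}$, which demands $y_i=M(G(w)_i)$ for \emph{every} $i$. That relation has a unique bad $y$ per $w$, which is the form for which the paper invokes known constructions from DDH, LPN and LWE. Strictly speaking, though, CI for $R_{G,M}$ only rules out seeds with full agreement, not seeds reaching the tester's actual threshold. So it does not by itself give soundness of this $\mathcal T$.

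Your threshold relation $R_M$ is the one that matches the acceptance condition. Under the assumption as stated in Theorem~\ref{thm:inds} (CI for all polynomial-time evasive relations), it closes the argument in one line. The price is that the bad set for each $w$ is a Hamming ball of density about $e^{-2\lambda^{1/3}}$ rather than a single point. You therefore rely on the full-strength assumption, not on the searchable case the paper points to. Your suggestion of increasing the number of marked points is the natural way to push that density down if a concrete construction demands it.

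Your remark that the relation depends on $M$, so that some non-uniform form of CI is needed, applies equally to the paper's $R_{G,M}$. The paper leaves this implicit.
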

\begin{proof}
Let $M$ be any fixed model generated independently of $H$.
Then for any $w\in\{0,1\}^\lambda$, the probability (over the choice of $H$) that $M(G(w)_i)=H(w)_i$ for at least~$\left(\frac{1}{2}\lambda + \lambda^{2/3}\right)$ of the indices $i\in[\lambda]$, a number that deviates from the expectation by~$\Theta(\lambda^{1/6})$ standard deviations, is $\negl(\lambda)$.  
Consequently, any adversary that makes at most $\tau$ queries to $H$ and then outputs a proof $w$ can cause $\mathcal T^{{M}}(p,w)$ to accept with probability at most $\tau\cdot  \negl(\lambda)$. 
\end{proof}

\begin{lemma}\label{lem:unrem}
If an adversary $\mathcal{R}$ breaks the unremovability of $(\mathcal G,\mathcal W,\mathcal T)$,  
then it can be used to construct a correctable canonicalization $C^{(\cdot)}$ for~$\mathcal{M}$ with respect to ($\approx^\mathcal{U}_{1/\poly(\lambda)}$,$ \approx^{\mathcal E}$)  
with comparable time and oracle complexity.
\end{lemma}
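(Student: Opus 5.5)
The plan is to use the remover $\mathcal R$ as a black box on a \emph{simulated} marking of the corrector's own oracle. Given oracle access to some $f'\approx^{\mathcal E} M$, the corrector $C^{f'}$ samples its own random oracle $H$ (lazily, or via fresh public parameters $p\leftarrow\mathcal G(1^\lambda)$) and runs $M''\leftarrow\mathcal R^{f'}(p)$, answering every oracle query of $\mathcal R$ with $f'$. On input $x$ it outputs $M''^{f'}(x)$, again evaluating the $M'$-gates with $f'$. The canonical function will be
\[
\bar M(x):=\text{the majority value of } M''^{M}(x) \text{ over the internal randomness of } \mathcal R \text{ and } p .
\]
This is the behaviour of the remover when it is handed the \emph{unmarked} model $M$. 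It depends only on $M$ and $\mathcal R$, and not on $f'$, as Definition~\ref{def:canon} requires.

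\textbf{Step 1: correctability.} For $f'\approx^{\mathcal E} M$, and in particular for honestly marked $M'$, I will argue that $C^{f'}(x)=\bar M(x)$ except with negligible probability. Every query that $\mathcal R$, and later $M''$, makes is a point that an efficient algorithm with oracle access to $f'$ and $M$ chooses. If the answers differed from those under $M$ with non-negligible probability, that algorithm would locate a disagreement point. The single exempted point $z$ in the definition of $\approx^{\mathcal E}$ is set to the input $x$ itself, which is what lets us handle the evaluation at $x$. So the run of $\mathcal R^{f'}$ and $M''^{f'}(x)$ is statistically identical to the run with $M$ except on a negligible event.

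To turn ``majority'' into ``with probability $1-\negl$'' I will amplify: run $\mathcal R$ polynomially many times and take the majority vote. This requires showing that $M''^{M}(x)$ is essentially determined, i.e.\ biased away from $1/2$. That follows from the next step on most points, and I will absorb the remaining points into the $1/\poly$ distance.

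\textbf{Step 2: $\bar M\approx^{\mathcal U}_{1/\poly} M$.} By hypothesis $\mathcal R$ breaks unremovability, so with non-negligible probability $M''^{M'}\approx^{\mathcal E}M'$ and $\mathcal T$ rejects. By Step 1, $M''^{M'}$ and $M''^{M}$ agree on all efficiently found points. From $M''\approx^{\mathcal E} M'\approx^{\mathcal E} M$, sampling uniform points and querying both functions gives $\Pr_{x\sim\mathcal U}[M''(x)\neq M(x)]\le\negl$; otherwise random sampling would find a disagreement. Averaging over runs then gives $\bar M\approx^{\mathcal U}_{1/\poly}M$, with the $1/\poly$ slack reflecting the non-negligible but possibly small success probability of $\mathcal R$.

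\textbf{Main obstacle: rejection.} The key, and I expect hardest, step is the use of rejection. The marked points $x_i=G(w)_i$ are pseudorandom, so they look like uniform points that $\mathcal R$ never saw differently. I need that $\mathcal T$ rejecting forces $M''(x_i)\ne y_i=H(w)_i$ on many indices. By pseudorandomness of $G$ and Step 1, $M''$ is also essentially independent of which points are marked. Hence rejection means $M''$ follows $M$ and the $\bar M$ behaviour rather than the planted labels on the $x_i$.

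This is what justifies that a single canonical $\bar M$ exists. Otherwise $M''$ would have to ``read'' the planted label at $x_i$ through its $M'$-gate, and then $M''(x_i)=y_i$ would make $\mathcal T$ accept; this is the case analysis in the overview. I would formalize it by a hybrid replacing $G(w)$ with truly random points and $H(w)$ with random bits. The point needing most care is that $M''$ is evaluated with $M'$-gates at exactly the marked points, where $M'$ and $M$ \emph{do} differ. Here I will lean on the exempted point $z$ in $\approx^{\mathcal E}$, applied with $z=x_i$, so that the computation of $M''(x_i)$ cannot find any further marked point. The comparable resource bounds are then immediate, since $C$ just runs $\mathcal R$ and $M''$.
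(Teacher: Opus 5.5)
There is a genuine gap in Step 1, at the input point itself. Your corrector evaluates $M''^{f'}(x)$ with the $M'$-gate at $x$ answered by $f'(x)$, and you invoke the exempted point $z=x$ of $\approx^{\mathcal E}$ to handle this. The exemption does the opposite of what you need. $z$ is exactly the point at which $f'$ is \emph{allowed} to disagree with $M$, so setting $z=x$ only guarantees that the \emph{other} queries of $\mathcal R$ and of $M''$ hit agreement points.

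Correction must nevertheless succeed at such disagreement points. Theorem~\ref{thm:easy-dir} uses the corrector as a remover, so its answers must coincide with $\bar M$ at the marked points $x_i$ that the tester queries, where $M'\neq M$. If correctness were only required where $f'$ agrees with $M$ (e.g.\ at an $x$ fixed independently of $f'$), the identity $C^{f'}(x)=f'(x)$ would already qualify.

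At any $x_i$ where $M''$ copies (resp.\ inverts) the value of its gate at the input, your $C^{M'}(x_i)$ returns $y_i$ (resp.\ $1-y_i$). That value depends on $f'$ and is not canonical. Rejection does not exclude such points. It only says fewer than $\lambda/2+\lambda^{2/3}$ indices agree, which is compatible with $M''$ copying on $\Theta(\lambda^{2/3})$ marked points, i.e.\ on a $\Theta(\lambda^{-1/3})$ fraction of inputs. Nor can these points be absorbed into the $1/\poly(\lambda)$ slack. The requirement $\Pr[C^{f'}(x)=\bar M(x)]\ge 1-\negl(\lambda)$ is pointwise in $x$, and the slack only covers points where the fixed function $\bar M$ differs from $M$. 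The same objection applies to your majority vote on points where $M''^{M}(x)$ is nearly unbiased. Working with a single output circuit of $\mathcal R$ with fixed coins, as the paper does, avoids this.

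The missing idea is to never consult the oracle at the point being corrected. Let $\mathcal C$ be the oracle circuit output by $\mathcal R$. Define $C^{f'}(x)$ to evaluate $\mathcal C$ with the gate at $x$ hardwired to $0$ and all other gates answered by $f'$, and set $\bar M(x):=g_M(x,0)$. Now $\approx^{\mathcal E}$ with $z=x$ genuinely makes the output a function of $M$ alone at every $x$, including the marked points.

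The price is that $\bar M$ may differ from $M''$ wherever $M''$ depends on its gate at $x$. This is where rejection must be used quantitatively:
\begin{itemize}
\item Classify $x$ by whether $g_M(x,\cdot)$ ignores, copies, or inverts its bit. This type is computable without $w$, so by PRG security its distribution on $G(w)$ matches the uniform one.
\item Inverting points are negligible, since $M''\approx^{\mathcal E}M'$.
\item On ignoring points of $\mathcal X_w$, $M''$ agrees with the independent $H(w)$ about half the time, while on copying points it always agrees.
\item So rejection forces the copying fraction below roughly $3\lambda^{-1/3}$, which yields $\bar M\approx^{\mathcal U}_{1/\poly(\lambda)} M''\approx M$.
\end{itemize}
In your write-up the rejection hypothesis does no work in Steps 1 and 2: your $\bar M$ is close to $M$ from $M''\approx M'$ alone. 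That is a symptom that this step is missing.
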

\begin{proof}
Let $(M',w)\leftarrow \mathcal{W}(M)$ and let $M''\leftarrow \mathcal{R}^{M'}(M')$ be the adversary’s modified model.  
In the black-box setting, $\mathcal{R}$ can only query $M'$ during its execution and can produce $M''$ as an oracle circuit  
$\mathcal{C}^{M'}$, so that $M''(x) = \mathcal{C}^{M'}(x)$.

Since $M' \approx^{\mathcal E} M$, we may assume that $\mathcal{R}$ never observes a point where $M$ and $M'$ differ during construction, or during evaluation besides the given input point.  
Therefore, for any input $x$, the value $\mathcal{C}^{M'}(x)$ depends only on $(M, x)$ and possibly on the single bit $M'(x)$.  
We define
\[
g_M(x,b) := \mathcal{C}^{M_b}(x),
\]
where $M_b$ is the model that behaves like $M'$ except that its output on $x$ is fixed to $b$.  
This allows us to simulate $g_M(x,b)$ for both $b\in\{0,1\}$ given oracle access to $M'$.
Note that~$g_M(x, \cdot)$ is merely a deterministic binary function from one bit to one bit.
Each input $x$ thus falls into one of three cases:
\begin{enumerate}
    \item $g_M(x,0)=g_M(x,1)$: $\mathcal{C}^{M'}(x)$ ignores $M'(x)$.
    \item $g_M(x,b)=b$: $\mathcal{C}^{M'}(x)$ reproduces $M'(x)$.
    \item $g_M(x,b)=1-b$: $\mathcal{C}^{M'}(x)$ inverts $M'(x)$.
\end{enumerate}

Let $D(x)\in\{1,2,3\}$ denote the type of $x$ as defined above; $D(x)$ depends only on $(g_M(x,0),g_M(x,1))$ and is therefore efficiently computable by $\mathcal{R}$ without knowledge of $w$.
We define the canonicalization~$\bar{M}=g_M(x,0)$ of~$M$, which as stated depends only on~$M$ itself and not on the negligible difference between~$M'$ and~$M$ with all but negligible probability.

Our main observation is that since $D$ is efficiently computable and independent of $w$, the distribution of $D(x)$ over $\mathcal{X}_w$ and over $\mathcal{X}$ differ only negligibly (otherwise $\mathcal{R}$ could distinguish the PRG output).  

Since $M''  \approx^{\mathcal E} M'$, the fraction of $x$ for which $D(x)=3$ is negligible, and thus also is that fraction within~$\mathcal{X}_w$.

On points of~$\mathcal{X}_w$ of type~$(1)$,~$M''$ identifies with~$\bar{M}$ that is defined independently of the difference between~$M$ and~$M'$ and thus independently of~$w$. Thus, with all but negligible probability the number of points with $M(G(w)_i)=H(w)_i$ of Type~$(1)$ will be of distance at most~$\frac{1}{2}\lambda^{2/3}$ from half of the points of this type.
If $\mathcal{R}$ successfully fools the judge, then, the fraction of $x\in \mathcal{X}_w$ for which $D(x) = 2$ must be at most~$3\lambda^{-1/3}$.
Hence, $D(x)=2$ only on a~$3\lambda^{-1/3}$ fraction of $\mathcal{X}$.
 
We conclude that for all but a $1/\poly(\lambda)$ fraction of input points, $D(x)=1$; that is, $\mathcal{C}^{M'}(x)$ ignores $M'(x)$ entirely and returns the independently defined~$\bar{M}(x)$.  
Therefore, the function
\[
\bar{M} := g_M(x,0)
\]
is efficiently computable by $\mathcal{R}$, satisfies $\bar{M} \approx^\mathcal{U}_{1/\poly(\lambda)} M''  \approx^{\mathcal E} M'$, and can be evaluated without access to $M'(x)$ on the point being queried, yet all other points that are queried are points in which~$M,M'$ do not differ with all but negligible probability.  
Consequently, $\mathcal{C}$ implements a correctable canonicalization.
\end{proof}

\subsection{Instantiating the random oracle }\label{subsec:ro-inst}

We observe that it is possible  to replace the random oracle in the above construction with a  description $p$ of a function chosen randomly from a correlation intractable (CI) hash function family with respect to polytime computable evasive relations (see e.g. \cite{CanettiGH04,CCHLRRW19,PeikertS19,BrakerskiKM20}).    Specifically,  consider the following relation:
\[
R_{G,M} = \{ (w,y)\in(\{0,1\}^\lambda)^2 :  y_i=M(G(w)_i), \  i=1..\lambda\}.
\]
Observe that $R_{G,M}$ is sparse.  In fact, for every fixed $G,M,w$ there is a unique string $y$ such that $R_{G,M}(w,y)$ holds.  It follows that no polynomial-time adversary $A$ can, given $G$, $M$, and a function $h$ drawn from a family $\mathcal H$ of hash functions that are correlation intractable with respect to $R_{G,M}$, find $w$ such that $R_{G,M}(w,h(w))$ holds, except with negligible probability.  This in turn implies soundness for the instantiation based on $G$ and $\mathcal H$.

We note that CI hash families  with respect to relations such as $R_{G,M}$  are known to exist under a variety of standard computational hardness assumptions such  as Decisional Diffie-Hellman,  Learning Parity with Noise  and Learning with Errors (see above references). Furthermore, existence of such CI families implies existence of  pseudorandom generators.

\subsection{Beyond  Evasive Difference Relations}\label{subsec:nonind}

Theorem~\ref{thm:inds} asserts the security of our construction  under the evasive difference relation
$ \approx^{\mathcal E}$.  That is, 
it demonstrates unremovability against adversaries that only modify the model slightly. 

We now strengthen this result to apply also to 
 scenarios in which the thief might modify the watermarked
model $M'$ into a new model $M''$ that is not indistinguishable, but is rather only some 
\emph{distributional approximation} $M'' \approx_{\varepsilon}^{\mathcal{U}} M'$, potentially with a large $\varepsilon <\frac{1}{3}$.   Remarkably, the marked model will remain very close to the original (i.e.,  $M'  \approx^{\mathcal E} M$).

We say that an adversary $\mathcal{R}$ breaks unremovability under
$(\mathcal{U},\varepsilon)$-approximation if, given oracle access to $M'$, it outputs a model $M''$ such that
$M'' \approx_{\varepsilon}^{\mathcal{U}} M'$ and the ownership test rejects with non-negligible
probability.
That is, the watermark is removed while the resulting model remains correct on a
$1-\varepsilon$ fraction of inputs drawn from~$\mathcal{U}$.

Allowing the thief a distributional approximation weakens the conclusion of
Theorem~\ref{thm:inds}, but does not invalidate its core implication:
The distance between each model and its canonicalization will now be~$\Theta(\varepsilon)$.
Furthermore, we get this with essentially the same construction of the proof of ownership.
We now formalize and prove Theorem~\ref{thm:dist-correct}.

We construct~$(\mathcal G,\mathcal W,\mathcal T)$ similarly to Section~\ref{subsec:detailed}. The algorithm~$\mathcal W$ remains unchanged, and the judge~$\mathcal T$ is relaxed as follows.

\paragraph{The judge $\mathcal{T}$.}
Given $(M',w)$, the tester computes $x_1,\ldots,x_\lambda=G(w)$ and $y_1,\ldots,y_\lambda={\cal H}_p(w)$, then samples $k=\Theta(\lambda)$ random indices $S \subseteq [\lambda]$.  It accepts if and only if $M'(x_i)=y_i$ for at least a $\frac{1+\varepsilon}{2}$ fraction of the sampled indices $i\in S$.

\paragraph{}
The proofs of Lemma~\ref{lem:complete} and Lemma~\ref{lem:sound} still hold without modification.

\begin{lemma}
If an adversary $\mathcal{R}$ breaks the unremovability of $(\mathcal G,\mathcal W,\mathcal T)$ for a class $\mathcal{M}$, with respect to~$\approx_\varepsilon^\mathcal{U}$,
then it can be used to construct a correctable canonicalization $C^{(\cdot)}$ with respect to ($\approx^\mathcal{U}_{\left(3\varepsilon+1/\poly(\lambda)\right)}$,$ \approx^{\mathcal E}$)
with comparable time and oracle complexity.
\end{lemma}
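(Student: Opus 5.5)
We follow the proof of Lemma~\ref{lem:unrem} closely and change only the counting step. Write the adversary's output as an oracle circuit $M''=\mathcal{C}^{M'}$. Since $M'\approx^{\mathcal E}M$, we may assume that apart from the queried input itself, $\mathcal{C}$ never observes a point where $M$ and $M'$ differ, except with negligible probability. We then define $g_M(x,b):=\mathcal{C}^{M_b}(x)$, where $M_b$ agrees with $M'$ except that its value at $x$ is forced to $b$. Each $x$ gets a type $D(x)\in\{1,2,3\}$: ignore, copy, or invert $M'(x)$. The candidate canonicalization is $\bar M(x):=g_M(x,0)$, and the corrector $C^{f'}(x)$ simply evaluates $\mathcal{C}^{f'_0}(x)$. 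By the same argument as before, $\bar M$ depends only on $M$ and $\mathcal{C}$, not on $w$. It is also computed correctly from any $f'\approx^{\mathcal E}M$, since all queries other than $x$ avoid the difference set. So the correction property carries over verbatim, and the task is to bound $\Pr_{x\sim\mathcal U}[\bar M(x)\neq M(x)]$ by $3\varepsilon+1/\poly(\lambda)$.

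Let $\alpha_1,\alpha_2,\alpha_3$ be the uniform masses of the three types. Because $D$ is computable from oracle access to $M'$ alone, pseudorandomness of $G$ gives that the type frequencies among the marked points $x_1,\dots,x_\lambda$, and hence among the $k=\Theta(\lambda)$ sampled indices $S$, are within $1/\poly(\lambda)$ of $\alpha_1,\alpha_2,\alpha_3$ with high probability. The same holds for the joint statistics of $(D(x),\bar M(x),M(x))$. We use the closeness $M''\approx^{\mathcal U}_\varepsilon M'$ together with $M'\approx^{\mathcal E}M$. On type-3 points $M''\ne M'$, and on type-1 points $M''=\bar M$. Hence $\alpha_3+\Pr[D=1,\bar M\ne M]\le\varepsilon+\negl$.

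Now we count the judge's agreements on $S$. On a marked point of type 2, $M''(x_i)=y_i$. On type 3, $M''(x_i)=1-y_i$. On type 1, $M''(x_i)=\bar M(x_i)$, which is independent of $y_i={\cal H}_p(w)_i$; this is where the random-oracle or CI argument enters, as in Lemma~\ref{lem:sound}, so agreement there is about $1/2$. Chernoff over $S$ then gives an expected agreement fraction of about $\tfrac12\alpha_1+\alpha_2$. Rejection with non-negligible probability at threshold $\tfrac{1+\varepsilon}{2}$ forces $\tfrac12\alpha_1+\alpha_2\le\tfrac{1+\varepsilon}{2}+1/\poly$. Using $\alpha_1=1-\alpha_2-\alpha_3$, this becomes $\alpha_2\le\varepsilon+\alpha_3+1/\poly\le 2\varepsilon+1/\poly$. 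Finally,
\[
\Pr[\bar M\neq M]\le \alpha_2+\alpha_3+\Pr[D=1,\bar M\neq M]\le (2\varepsilon)+\varepsilon+1/\poly(\lambda),
\]
which is $3\varepsilon+1/\poly(\lambda)$. The extra $\varepsilon$ comes from the type-3 and type-1 error mass already bounded by $\varepsilon$. This gives $\bar M\approx^{\mathcal U}_{3\varepsilon+1/\poly}M$.

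The main obstacle is the transfer from uniform statistics to statistics on the marked set $\mathcal{X}_w$. The quantity $\Pr[D=1,\bar M\ne M]$ involves $M$, which the distinguisher cannot evaluate at marked points. However, on unmarked points $M=M'$, so the test needs only $M'$, $\mathcal{C}$, and the candidate point, and a PRG distinguisher can be built directly. Care is still needed because the $x_i$ are correlated with $w$, which determines $y_i$. I would handle this by noting that type-1 outputs are computed without $M'(x_i)$ and hence without any $w$-dependent value. The remaining quantities, $\alpha_2$ and $\alpha_3$, need no such correlation.
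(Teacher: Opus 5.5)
Your proposal is correct and follows the paper's proof step for step. It uses the same $g_M$, type function $D$ and canonicalization $\bar M=g_M(\cdot,0)$, the same PRG transfer of type frequencies to the marked set, and the same count $\tfrac12\alpha_1+\alpha_2$ against the threshold $\tfrac{1+\varepsilon}{2}$, giving $\alpha_3\le\varepsilon$ and $\alpha_2\le 2\varepsilon$. Your final accounting is more careful than the paper's: the paper passes from ``$D(x)=1$ off a $3\varepsilon$ set'' directly to ``$\bar M=M$ off a $3\varepsilon$ set'' without noting that type-1 points can still have $M''\neq M'$, and your bound $\alpha_3+\Pr[D=1,\bar M\neq M]\le\varepsilon+\negl(\lambda)$ closes exactly that step. (Only the type frequencies need to be transferred to $\mathcal{X}_w$, so the joint-statistics worry in your last paragraph is unnecessary.)
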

\begin{proof}
Since we still have $M' \approx^{\mathcal E} M$, we may define $g_M(x,b)$ and $D(x)$ exactly as in Lemma~\ref{lem:unrem}.
Also as before, we have that for each type~$t\in \{1,2,3\}$,
\[
\Pr_{i\sim [\lambda]}\left(D(x_i)=t\right)
=\Pr_{x\sim \mathcal U}\left(D(x)=t\right) \pm o(1).
\]

By definition of~$M'$ and the independence of~$H$ from~$M$, we also have
\[
\Pr_{i\sim [\lambda]}(M'(x_i)=y_i) = 1,\;\;\;
\Pr_{x\sim \mathcal U}(M'(x)=y_i) = \frac{1}{2}\pm o(1).
\]
Hence, 
\[
\Pr_{i\sim [\lambda]}(M''(x_i)=y_i) = 
\Pr_{x\sim \mathcal U}\left(D(x)=1\right) \cdot \frac{1}{2}
+ 
\Pr_{x\sim \mathcal U}\left(D(x)=2\right) \cdot 1
\pm o(1).
\]

Since $M'' \approx_\varepsilon^\mathcal{U} M'$, the probability that $D(x)=3$ is bounded by~$\varepsilon$ by definition. 
If we assume by contradiction that the probability of~$D(x)=2$ is at least~$2\varepsilon$, then we have that
\[
\Pr_{i\sim [\lambda]}(M''(x_i)=y_i) \geq 
\left(1-3\varepsilon\right) \cdot \frac{1}{2}
+ 
\left(2\varepsilon \right) \cdot 1
\pm o(1)
\geq
\frac{1+\varepsilon}{2} \pm o(1)
\]
and hence the judge accepts~$\mathcal{T}(M'',w)$ in contradiction to $\mathcal{R}$ successfully fooling it.

We conclude that for all but at most~$3\varepsilon+o(1)$ probability over~$\mathcal{U}$, $D(x)=1$; that is, $\mathcal{C}^{M'}(x)$ ignores $M'(x)$ entirely.  
Therefore, the function
\[
g'(x) := g_M(x,0)
\]
is efficiently computable by $\mathcal{R}$, can be evaluated without access to $M'(x)$ on the point~$x$, and is equal to~$M(x)$ on at least a~$(1-3\varepsilon -o(1))$ fraction of the space~$\mathcal{X}$.
\end{proof}

\subsection{Beyond Binary Classifiers}\label{subsec:multiclass}
We note that the above results for binary classifiers extend, with essentially no changes, to
classifiers with any \emph{constant} number of labels. Concretely, let
$M:\mathcal{X}\to\mathcal{Y}$ where $|\mathcal{Y}|=q$ for a fixed constant $q>2$.

\paragraph{Reducing to a binary projection.}
Fix any map $\phi:\mathcal{Y}\to\{0,1\}$.
Consider the induced binary classifier $\phi\circ M:\mathcal{X}\to\{0,1\}$.
All definitions and constructions in this section apply to $\phi\circ M$ verbatim, and therefore
any successful remover for the corresponding proof of ownership implies (via our dichotomy) the
existence of a self-corrector for a function close to $\phi\circ M$.

It remains to explain how to implement the binary construction as a transformation of the original
multi-class model $M$.
In our watermarking procedure, the transformed model $M'$ differs from $M$ only on a designated
watermark set $X_w\subseteq\mathcal{X}$, and on each $x\in X_w$ the transformation enforces a
desired output bit $b\in\{0,1\}$.
In the multi-class setting we proceed identically, except that to enforce bit $b$ we output a fixed
label in $\phi^{-1}(b)$.
Formally, assume $\phi^{-1}(0)$ and $\phi^{-1}(1)$ are both nonempty (otherwise $\phi\circ M$ is
constant and the discussion is trivial), and fix any $y_0\in\phi^{-1}(0)$ and
$y_1\in\phi^{-1}(1)$.
Define the watermarked model $M'$ by
\[
M'(x)\;=\;
\begin{cases}
y_b & \text{if } x\in X_w \text{ and the watermark prescribes bit } b,\\
M(x) & \text{otherwise}.
\end{cases}
\]
The tester $\mathcal{T}$ applies the same binary test as before to the oracle
$\phi(M'(\cdot))$. Soundness, quality preservation, and unremovability  reduce
immediately to the corresponding statements for $\phi\circ M$.

\paragraph{From $q=O(1)$ labels to $\ell=O(1)$ bits.}
When $q$ is constant, we may fix any injective encoding $\mathrm{Enc}:\mathcal{Y}\to\{0,1\}^{\ell}$
with $\ell=\lceil\log_2 q\rceil=O(1)$, and for each $i\in[\ell]$ define the projection
$\phi_i:\mathcal{Y}\to\{0,1\}$ by $\phi_i(y) = (\mathrm{Enc}(y))_i$.
Applying the above reduction with $\phi=\phi_i$, we obtain that for each $i$,
breaking ownership for the induced protocol implies self-correctability (in the sense of this
section) of a function close to $\phi_i\circ M$.

Moreover, if each bit $\phi_i\circ M$ is self-correctable with error at most $\varepsilon$, then so
is $M$ itself with error $O(\varepsilon)$ (since $\ell=O(1)$): on input $x$, run the correctors for
each $\phi_i\circ M$ to obtain a candidate bit-string $b\in\{0,1\}^{\ell}$, and output any label
$y\in\mathcal{Y}$ minimizing the Hamming distance between $\mathrm{Enc}(y)$ and $b$ (this search is
constant-time since $|\mathcal{Y}|=O(1)$). A union bound over $i\in[\ell]$ gives overall error at
most $\ell\varepsilon=O(\varepsilon)$.

Finally, we emphasize that this reduction relies on the classifier-specific requirement that the
thief produce a model that agrees \emph{exactly} with the target model on almost all inputs.
For generative models, where exact equality is not the relevant notion of similarity, one needs a
different framework.

\section{Discussion and Open Problems}\label{sec:discuss}
This work introduced a formal framework for studying \emph{Model Ownership}: the question of whether the ownership of a machine learning model can be provably established after the model has been released.  We proved a fundamental dichotomy under initial basic settings: a classifier admits an unremovable proof of ownership if and only if it is not close to a self-correctable function, in the black-box setting.  
Several natural extensions and challenges remain open, both conceptually and technically.

\paragraph{Beyond Classification.}
Our results focus on binary classifiers, which we then extend to general classifiers and functions, where model similarity and correctness can be defined pointwise over the input domain.  
However, extending the theory to \emph{generative models} with \emph{natural similarity measures} poses a deeper challenge: for such models, the notion of similarity between functions is inherently ambiguous.  
Two text or image-generating models may produce outputs that are perceptually indistinguishable but far apart under any formal metric on strings or pixels.  
Thus, defining a meaningful functional similarity relation, and hence a meaningful notion of self-correction, is itself an open problem.  
Furthermore, in the generative setting, one can also watermark the \emph{outputs} rather than the model's functionality or weights.  
While output watermarking may provide practical ownership guarantees, it belongs to a fundamentally different regime, as it marks data rather than computation.  
Bridging these perspectives requires new formal definitions that can connect functional and distributional similarity.

\paragraph{Owner-Side Noticeable Noise.}
Our formal results for unremovability maintain a strict asymmetry: while Section~\ref{subsec:nonind} allows the \emph{thief} to introduce noticeable distributional noise ($\approx_{\varepsilon}^{\mathcal{U}}$), our dichotomy requires that the \emph{model owner} relies exclusively on evasive difference-based similarity ($\approx^{\mathcal{E}}$). 
In practice, an owner may be willing to noticeably alter their model before publication---paying a noticeable yet small accuracy degradation cost---to protect the model's ownership.
It remains open whether expanding the owner's noise budget fundamentally alters the theoretical barriers established by our dichotomy.

\paragraph{White-Box Setting.}
In the white-box setting, the adversary gains access to the complete description of the model, including its parameters or source code. Once the entire code of $M'$ is visible, the idea of ``self-correction'', which assumes the existence of hidden or inaccessible regions of the function, becomes much harder to define. Establishing a version of our dichotomy that holds in the white-box world, perhaps relying on notions of obfuscation~\cite{BarakGoldreichImpagliazzoRudichSahaiVadhanYang2001,BCP13,ABGSZ13} to reduce the white-box case to the black-box one, or stronger cryptographic assumptions, remains an intriguing open problem.

\subsection*{Acknowledgments}
OZ's research is partially supported by a grant from the Israeli Ministry of Science and Technology.

\bibliographystyle{plain}
\bibliography{ref}
\end{document}